%% file: paper.tex
\documentclass[runningheads]{llncs}
\usepackage[T1]{fontenc}
\usepackage{graphicx}
\usepackage{booktabs}
\usepackage[misc]{ifsym}
\newcommand{\corr}{(\Letter)}
\usepackage{algorithm}
\usepackage{algpseudocode}
\usepackage{amsfonts,amsmath}
\usepackage{dsfont}
\usepackage{hyperref}
\usepackage{multirow}
\usepackage{subcaption}
\usepackage{svg}
\usepackage{tabularx}
\usepackage{tikz}

\usetikzlibrary{arrows.meta,positioning}
\newcolumntype{C}{>{\centering\arraybackslash}X}

\newcommand*{\D}{\ensuremath{\mathcal{D}}}
\newcommand*{\Dsyn}{\ensuremath{\tilde{\mathcal{D}}}}
\newcommand*{\Dtst}{\ensuremath{\D_{\textnormal{test}}}}
\newcommand{\law}[1]{\ensuremath{\mathbb{#1}}}
\newcommand{\lawdat}{\ensuremath{\law{D}}}
\newcommand{\lawsyn}{\ensuremath{\tilde{\law{D}}}}

\newcommand*{\Prob}{\ensuremath{\mathbb{P}}}

\newcommand{\domain}[1]{\ensuremath{\mathcal{#1}}}
\newcommand{\ind}[1]{\ensuremath{\mathds{1}\{#1\}}}
\newcommand{\indep}{\ensuremath{\mathrel{\perp\!\!\!\perp}}}

\begin{document}

\title{Disparate Impact in Synthetic Data Generation}

\author{
  Paul Andrey\inst{1} \corr
  \and Batiste Le Bars\inst{1}
  \and Michaël Perrot\inst{1}
  \and Marc Tommasi\inst{1}
}
\authorrunning{P. Andrey et al.}
\institute{
  Univ. Lille, Inria, CNRS, Centrale Lille,
  UMR 9189 - CRIStAL, F-59000 Lille, France
  \email{\{paul.andrey,batiste.le-bars,michael.perrot,marc.tommasi\}@inria.fr}
}

\maketitle

\begin{abstract}
  We revisit the fairness notion of disparate impact for synthetic data
  generation (SDG), that assesses whether the utility of generated records
  is the same across sensitive groups.
  Our approach departs from existing work on fair SDG, that address the problem
  of correcting for undue biases in the observed distribution, hence redefining
  SDG as learning a distribution that is not that of the real data.
  By contrast, non-disparate impact is notably achieved when the synthetic
  and real distributions are the same.
  We expose reasons why SDG may fail to reach that solution and discuss why
  approximation and estimation errors occur and can be disparate across groups.
  We notably look into the expressive power of SDG methods relative to
  distribution complexity, sampling errors due to group proportions, and
  estimation errors induced by differential privacy mechanisms.
  We illustrate cases of disparate impact on both artificial and real-world
  data, focusing on SDG methods that rely on probabilistic graphical models.
  We also introduce a strategy of learning group-wise SDG models, that
  constitutes a simple baseline for analyzing and in some settings mitigating
  cases of disparate impact.

  \keywords{
    Synthetic Data Generation
    \and Algorithmic Fairness
    \and Disparate Impact
    \and Differential Privacy
    \and Probabilistic Graphical Models.
  }
\end{abstract}

\newcommand\blfootnote[1]{%
  \begingroup
  \renewcommand\thefootnote{}\footnotetext{#1}%
  \endgroup
}
\blfootnote{
  \scriptsize
  Accepted at ECML PKDD 2026 (Research Track).\\
  First published in Lecture Notes in Computer Science (LNAI), volume 16946,
  pp 95-113, in 2026 by Springer Nature.
  Version of Record: \url{https://doi.org/10.1007/978-3-032-37673-2\_6}\\
  This expanded version includes appendices that detail experimental results,
  and a minor rewrite of definition~\ref{def:disparate-impact}.
}

\section{Introduction}

\subsection{Context and Motivation}

Synthetic Data Generation (SDG) consists in generating a dataset of artificial
records that is statistically close to a real-world observed dataset. It may
notably be used to provide illustrative data from a domain for educational
purposes, or to enable learning statistics or even training machine learning
(ML) models without releasing private individual records~\cite{misc:synthea}.
Most state-of-the-art SDG methods provide formal privacy guarantees based on
the Differential Privacy (DP) framework~\cite{dp:dwork-roth}, as a protection
against privacy attacks that may otherwise reveal information about
the real-world training data~\cite{mia:stadler,mia:tapas,mia:tamis}.

In addition to privacy, a topic of great concern for trustworthy ML is
whether models are fair, meaning that their usage does not unduly
put some individuals or groups to a disadvantage.
The notion of algorithmic fairness is multi-faceted, due to the variety of
views on what constitutes an inequity, resulting in many distinct (at times
incompatible) fairness definitions co-existing in the
literature~\cite{fairness:barocas}.
In the context of SDG, fairness has mostly been regarded as a question of
dataset or distribution bias, where SDG acts as a pre-processing step
to learn fair classifiers~\cite{fairness:fairgan,fairness:cfgan,%
fairness:decaf,fairness:abroshan,fairness:prefair}.
This line of work assumes that the real-world data is biased, and redefines
the objective of SDG as learning a synthetic data distribution that is
structurally different from the observed one, in the sense that it enforces
some (conditional) independence between attributes that were not independent
in the real-world data. This approach relates to the notion
of fair representations~\cite{fairness:fair-representations}.

In this paper, we propose an alternative definition of fairness for SDG,
based on the notion of disparate impact~\cite{fairness:disparate-impact}.
We embrace the widely-studied framework of group
fairness~\cite{fairness:barocas}, where samples belong to mutually-exclusive
groups, and assess whether the utility of synthetic data relative to the
real-world one is equal across these groups.
This notion applies to virtually any SDG context, provided a utility metric
is defined, which encompasses but is not limited to the use of synthetic data
for ML.
In contrast with the definition of fair SDG as a de-biasing problem,
our approach assumes soundness of the real-world data and adopts the
standard SDG objective of estimating its underlying distribution.
As that distribution may be imperfectly estimated (whether due to the
limited expressive power of the SDG method, to the sampling error due to
observing a finite-size dataset, or to DP-induced estimation errors), we argue
that it is critical to assess whether these errors are disparate across groups.

Our goal is therefore to understand and illustrate mechanisms
responsible for disparate impact in SDG.
To the best of our knowledge, this is an under-explored research topic,
with existing works having solely focused on the disparate impact of
DP~\cite{eval:ganev2022,eval:bullwinkel2022}.
It may however be highly-relevant in real-world applications,
as generating synthetic samples that mis-represent certain groups may
hinder the way these groups are treated when that data is disseminated
either as illustration for human practitioners (e.g. for medical training) or
as training data for ML.

\subsection{Contributions}

We formalize and motivate the definition of fairness for SDG
as disparate impact. We then conduct experiments on both artificial and
real-world data to illustrate cases of disparate impact and understand their
causes.
We focus on SDG methods that rely on probabilistic graphical models
(PGM)~\cite{misc:graphical-models}, meaning that they learn a distribution
with an explicit conditional independence structure, represented as a graph.
Those methods~\cite{sdg:privbayes,sdg:mst,sdg:aim} have parametrizable
constraints as to which distributions they can learn, enabling us to question
the relative roles of approximation and estimation errors on disparate impact.

We notably investigate the relative and cumulative effects of the disparity
of sample size and the disparity of true distribution complexity across groups
on the resulting utility of synthetic data.
We also question the disparate impact of DP, that induces some randomness and
estimation errors to SDG, which may be more detrimental for some groups than
others~\cite{eval:ganev2022}, either structurally (for instance due to
disparate noise-to-signal ratio across varying-size groups) or stochastically.
Finally, we introduce a simple group-wise SDG modeling approach, and assess
how it affects both the overall utility and its disparity across groups.

\section{Disparate Impact of Synthetic Data Generation}\label{sec:contrib}

Given an empirical dataset $\D$ with underlying (unknown) distribution
$\lawdat$, SDG consists in producing a synthetic dataset $\Dsyn$, typically
by learning a generative distribution $\lawsyn$ and sampling from it.
In our experiments and part of our discussion, we specifically focus on SDG
methods that rely on a PGM~\cite{sdg:privbayes,sdg:mst,sdg:aim}. These methods
select a graph over attributes that defines a conditional independence
structure, and measure marginal or conditional statistics over $\D$ based
on that graph that define a joint distribution $\lawsyn$.
In order to provide DP guarantees, PGM-based methods add randomness to
the graph selection process and random noise to the measured statistics.
For completeness, we recall the formal definition of $(\epsilon,\delta)$-DP in
Definition~\ref{def:differential-privacy}.

\begin{definition}[Differential Privacy~\cite{dp:dwork-roth}]
  \label{def:differential-privacy}
  A randomized algorithm $\mathcal{A}$ with output space $\mathcal{O}$
  is $(\epsilon,\delta)$-DP for $\epsilon > 0$ and $\delta \in [0, 1[$
  if $\forall S \subseteq O,\ \forall D,\ D' = D \cup \{x\}$
  \begin{equation*}
    \Prob(\mathcal{A}(D) \in S) \leq
      e^\epsilon \Prob(\mathcal{A}(D') \in S) + \delta
  \end{equation*}
\end{definition}

We adopt the widely-studied framework of group fairness~\cite{fairness:barocas}
where samples belong to mutually-exclusive sensitive groups.
We consider $\lawdat$ to be a distribution over $(X, S)$ where
$X = (X_1, \dots, X_d)$ are $d$ feature attributes that can be in any domain
(although in our experiments they are categorical),
and $S = (S_1, \dots, S_{d_s})$ are $d_s$ categorical sensitive attributes,
the intersection of which define $K$ unique sensitive groups.
In the remainder of this paper, to keep notations light, we denote sensitive
groups as $S \in \domain{S} := \{1, \dots, K\}$, which somewhat conceals that
our definitions are valid for intersectional fairness settings.

We note individual records in lower case, and write conditional feature
distributions as $\lawdat_k := \lawdat(X | S = k)$
(respectively $\lawsyn_k := \lawsyn(X | S = k)$)
and group-wise feature datasets as $\D_k := \{x_i \in \D | s_i = k\}$
(respectively $\Dsyn_k := \{x_i \in \Dsyn | s_i = k\}$)
for any $k \in \domain{S}$.
In some contexts, we will consider an additional attribute $Y$ that is a
target variable for downstream ML tasks. In such cases, $Y$ is included in
both population-wide and group-wise distributions and datasets.

We use the term \emph{hypothesis class} to refer to the set of all possible
$\lawsyn$ that a given SDG method can output. When comparing distributions, we
also use the informal notion of \emph{complexity} to refer to how broad an
hypothesis class must be to contain them.
For PGM-based methods, hypothesis classes are defined by graph connectivity
properties, such as the maximum in-degree of graphs it can select. A higher
in-degree enables modeling more complex relationships between attributes,
hence when we state that a distribution has higher complexity than another
we typically mean that it requires a graph with higher in-degree for
its conditional independence structure to be accurately described.

\subsection{Unfairness as Disparate Impact}

We introduce a fairness definition for SDG that is based on the notion of
disparate impact~\cite{fairness:barocas}. It consists in assessing whether
synthetic data satisfy similar utility across sensitive groups, as formalized
in Definition~\ref{def:disparate-impact}.

\begin{definition}[Disparate Impact of SDG]\label{def:disparate-impact}
Given a post-processing function $f: \D \rightarrow F$,
a utility metric $u: F \times \D \rightarrow \mathbb{R}$,
a real-world testing dataset $\D' \sim \lawdat$
and a tolerance level $\tau \geq 0$,
a SDG method $\mathcal{A}$ has no $(f,u,\tau)$-disparate impact when
$\Dsyn = \mathcal{A}(\D)$ satisfies
\begin{equation}
    \label{eq:utility-parity}
    \forall k, s \in \domain{S},\
      \big|[u(f(\Dsyn), \D'_k) - u(f(\D), \D'_k)]
    - [u(f(\Dsyn), \D'_s) - u(f(\D), \D'_s)]\big| \leq \tau
\end{equation}
\end{definition}

Practical choices of $(f, u)$ are expected to be context-dependent, as is the
tolerance level $\tau$. Strict fairness is defined by setting $\tau=0$.

A natural instantiation of Definition~\ref{def:disparate-impact}
that is agnostic to downstream
tasks is to assess whether goodness-of-fit is similar across groups by taking
the identity as $f$ and a distance over probability distributions as $u$, such
as the Kullback–Leibler divergence or the Wasserstein distance $\mathcal{W}_p$.
In the latter case, when $\D' = \D$, fairness is achieved when
\begin{equation}
    \label{eq:distance-parity}
    \forall k, s \in \domain{S},\
    \mathcal{W}_p(\Dsyn_k, \D_k) = \mathcal{W}_p(\Dsyn_s, \D_s)
\end{equation}

Alternatively, $f$ may be a learning algorithm producing a classifier, and $u$
a standard evaluation metric such as its accuracy. In that case, disparate
impact assesses whether some groups suffer from a higher utility degradation
than others when using synthetic rather than real-world training data.
Note that SDG not having disparate impact does not imply that the learned
classifiers have similar accuracy across groups.
If one wants to learn a model that satisfies another fairness criterion, such
as accuracy or demographic parity, they should use a suitable downstream
learning algorithm.

\subsection{Sources of Disparate Impact}

By Definition~\ref{def:disparate-impact}, if $\lawsyn = \lawdat$ then the SDG
has non-disparate impact in expectation.
Therefore, our work applies to the (common) case where
the SDG method results in an imperfect estimation of $\lawdat$, and questions
whether modeling errors are equally distributed across sensitive groups.
We posit that these errors may be decomposed between approximation and
estimation errors.

On the one hand, approximation errors arise when $\lawdat$ is not part of the
hypothesis class of the SDG method, and can therefore not be learned.
If $\lawdat_1, \dots, \lawdat_K$ have heterogeneous complexity, the SDG method
may suffer an approximation error for some groups but not for others, resulting
in disparate impact.

On the other hand, estimation errors arise from observing a finite number of
samples $\D$ rather than $\lawdat$, so that statistics estimated from $\D$
to define $\lawsyn$ may differ from their true underlying value.
All else equal, a statistic $\mu(\D_k)$ is likely to have a higher estimation
error than its counterpart $\mu(\D_s)$ if $\mathbb{V}[X|S=k] > \mathbb{V}[X|S=s]$
(higher sample variance), or if $|\D_k| < |\D_s|$ (lower sample size).

Additionally, SDG methods may rely on heuristics to estimate the best candidate
for $\lawsyn$ in their hypothesis class that do not have strong formal
guarantees. It is notably the case for some PGM-based
methods~\cite{sdg:privbayes,sdg:aim}, that rely on greedy algorithms.
These heuristics are error-prone, and may have disparate impact.

Finally, DP mechanisms add randomness to SDG methods, which is bound to induce
some estimation errors. In general, one cannot predict whether DP-induced noise
will increase or decrease the disparity of errors across sensitive groups.
However, heterogeneous group proportions may cause the signal-to-noise ratio
of DP mechanisms to vary across groups, which is likely to advantage majority
groups, but can have opposite effects depending on SDG
methods~\cite{eval:ganev2022}.

\section{Related Work}\label{sec:related-work}

\subsubsection{Fair SDG and Generative Modeling}

To the best of our knowledge, all papers that address fairness for SDG~\cite{%
fairness:fairgan,fairness:cfgan,fairness:decaf,fairness:abroshan,%
fairness:prefair} formulate a de-biasing problem, that is related
to fair representations learning~\cite{fairness:fair-representations}.
Our work departs from this approach by formulating fairness for SDG as
disparate impact.
Some papers on image generation~\cite{fairness:choi2020,fairness:measuring-gen}
formulate fairness as ensuring that models produce samples for minority groups.
This problem is trivial in our setting, as we consider SDG methods that
explicitly model sensitive attributes, hence enabling conditional sampling.
Apart from SDG, \cite{fairness:gm-estimation} addresses fairness for graphical
model estimation, defined as learning a population-wide model that has
non-disparate impact on the utility of group-wise ones. This neglects the
potentially-disparate modeling errors of group-wise models, which our work
addresses.

\subsubsection{Evaluating SDG Methods}

Many works have conducted empirical assessments of the utility of SDG methods,
such as \cite{eval:tao2022,eval:ganev2024,eval:chen2025}.
They report and analyze population-wide utility, leaving disparate impact
unaddressed.
\cite{eval:cheng2021,eval:pereira2024} evaluate the fairness of downstream
classifiers but do so regardless of whether the real data would result in fair
models.
\cite{fairness:feedback-loops} demonstrated that unfairness can be induced by
SDG even when starting from non-biased data, which motivates our work, although
it considers a non-DP, multi-cycle setting that differs from ours.
The closest works to ours we know of are~\cite{eval:ganev2022} and~\cite{%
eval:bullwinkel2022}, which respectively showed that DP can have disparate
impact on the utility of synthetic data for under-represented groups, and on
the fairness of downstream classifiers.
Previous works outside of SDG have also studied the disparate impact of DP
on utility and identified it as a fairness issue~\cite{%
fairness:bagdasaryan-disparate-impact,fairness:pujol-decision-making}.
We go beyond these papers by investigating sources of disparate
impact in SDG beyond DP, and by assessing utility at the distribution level
in addition to measuring that of downstream ML classifiers.

\section{Experimental Setup}\label{sec:setup}

We conduct experiments that assess the disparate impact of a variety of
PGM-based SDG methods.
We first consider artificial settings where we control the properties of
true distributions, so as to investigate possible sources of disparate impact.
We then consider real-world socio-demographic data, to demonstrate real cases
of disparate impact and assess whether phenomenons assessed in our controlled
settings are at work.
In this section, we present the SDG methods that we compare, the datasets
that we use and the utility metrics that are reported in the upcoming
results section. Further details as to our methodology are provided in
Appendix~\ref{appendix:setup}. Our research is fully-reproducible, as
we make our source code, data and random seeds available online\footnote{
  \url{https://gitlab.inria.fr/magnet/thesepaulandrey/disparate-impact-sdg}
}.

\subsection{Compared SDG Methods}

We consider SDG methods that rely on a PGM and have DP guarantees.
These methods are state-of-the-art~\cite{eval:ganev2024,eval:chen2025},
and have explicit, configurable hypothesis classes, enabling us to know
whether $\lawdat$ is learnable in controlled settings, and to assess the effect
of the expressive power of methods on their utility and disparate impact.
MST~\cite{sdg:mst} uses a tree graph.
PrivBayes~\cite{sdg:privbayes} uses a bayesian network, meaning an acyclic
directed graph, constrained to limit the joint domain size of an attribute
and its parents to a quantity that is proportional to the DP budget $\epsilon$.
AIM~\cite{sdg:aim} uses a Markov Random Field, meaning an undirected graph,
constrained to have a junction tree representation of limited size and learned
from a parametrizable workload of marginals.

For each of these methods, we consider various privacy regimes:
$\epsilon \in \{1, 10, 100\}$. The smaller $\epsilon$, the stronger
the privacy.
For PrivBayes and MST, we use default hyper-parameters taken from the papers
they were respectively introduced in.
For AIM, we use all cliques of at most 3 attributes as workload, and limit
the computational cost of the method by adjusting other hyper-parameters,
detailed in Appendix~\ref{appendix:setup-sdg}.

We also consider non-DP methods.
We designed custom non-DP variants of MST and AIM, that have the same
hypothesis class as the base methods. Details about these are provided
in Appendix~\ref{appendix:setup-nondp}.
We also used the non-DP GreedyBayes~\cite{sdg:privbayes} algorithm, that is
mostly similar to PrivBayes, but is constrained by the maximum in-degree
$\rho$ of the graph. In practice, we use $\rho \in \{2, 3, 4\}$.

\begin{algorithm}[h]
\caption{Group-wise Synthetic Data Generation with DP}\label{alg:groupwise}
\begin{algorithmic}
\State \textbf{Inputs}:
  data $\D$,
  DP budget $(\epsilon, \delta)$,
  SDG method $\mathcal{A}: \D \times (\epsilon, \delta) \rightarrow \lawsyn$,
  $\alpha \in ]0, 1[$,
  $n \in \mathbb{N}$
\State \textbf{Output}: synthetic data $\Dsyn$
\State $\forall k \in \{1,\dots,S\},\
  c_k \gets \sum_{s \in \D} \ind{s = k} + \textnormal{Lap}(1 / (\alpha\epsilon))$
  \Comment{Measure noisy group sizes}
\State $(\tilde{p}_1, \dots, \tilde{p}_K) \gets (c_1, \dots, c_K) / \sum_{k=1}^K c_k$
  \Comment{Compute noisy group proportions}
\State $(n_1, \dots, n_K) \gets M(n;p_1,\dots,p_K)$ \Comment{Draw sample sizes from multinomial law}
\For{$k \in \{1,\dots,S\}$}
  \State $\lawsyn_k \gets \mathcal{A}(\D_k, ((1-\alpha) \epsilon, \delta))$
    \Comment{Learn group-wise distribution}
  \State $\Dsyn_k \gets \{x_i \sim \lawsyn_k\}_{i=1}^{n_k}$
    \Comment{Sample $n_k$ records}
  \State $S_k \gets \{k\}_{i=1}^{n_k}$
\EndFor
\State $\Dsyn \gets \textnormal{Concat}((\Dsyn_1, S_1), \dots, (\Dsyn_K, S_K))$
  \Comment{Assemble output $\Dsyn$}\\
\Return $\Dsyn$
\end{algorithmic}
\end{algorithm}

\subsection{Group-wise Synthetic Data Generation}

We introduce a meta-algorithm for SDG that consists in learning group-wise
distributions $\lawsyn_k$ via a given SDG method, and estimating group
proportions $\tilde{P}(S)$, so that synthetic records may be sampled as
$\tilde{s} \sim \tilde{P}(S)$ and $\tilde{x} \sim \lawsyn_{\tilde{s}}$.
We formalize this approach in Algorithm~\ref{alg:groupwise}, that applies to
any SDG method with DP guarantees and, as stated in Theorem~\ref{theorem:gw-dp},
achieves the same privacy guarantees.
Its non-DP counterpart has the same structure, but uses empirical sensitive
group proportions without noise.

In practice, we use $\alpha = 0.01$ in our experiments, allocating most of
the DP budget for SDG. Indeed, as group counts have a sensitivity of 1,
group proportions are well-preserved even for low values of $\alpha\epsilon$.

\begin{theorem}\label{theorem:gw-dp}
Algorithm~\ref{alg:groupwise} is $(\epsilon, \delta)$-DP with respect to $\D$.
\end{theorem}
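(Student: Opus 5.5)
The plan is to split Algorithm~\ref{alg:groupwise} into two mechanisms that touch the data. The first is the group-count release $(c_1,\dots,c_K)$. The second is the family of group-wise SDG calls $\mathcal{A}(\D_k,((1-\alpha)\epsilon,\delta))$ for $k=1,\dots,K$. Everything else is post-processing of these outputs, together with independent randomness that does not depend on $\D$: the normalisation into $(\tilde p_1,\dots,\tilde p_K)$, the multinomial draw of $(n_1,\dots,n_K)$, sampling from each $\lawsyn_k$, and concatenation. The theorem then follows from sequential composition of the two mechanisms combined with the post-processing property of DP~\cite{dp:dwork-roth}.

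\textbf{Counts.} Under Definition~\ref{def:differential-privacy}, neighbouring datasets differ by adding a single record $x=(x',s)$. Adding it changes exactly one count $\sum_{s\in\D}\ind{s=k}$, namely the one with $k=s$, and changes it by $1$. The vector of counts therefore has $\ell_1$-sensitivity $1$. Adding independent $\textnormal{Lap}(1/(\alpha\epsilon))$ noise to each coordinate is then the Laplace mechanism, which is $(\alpha\epsilon,0)$-DP.

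\textbf{Group-wise models (parallel composition).} This is the step I expect to need the most care. When $x$ is added, only $\D_s$ changes: it becomes $\D_s\cup\{x'\}$, while $\D_k$ is unchanged for every $k\neq s$. The joint output $(\lawsyn_1,\dots,\lawsyn_K)$ is produced by independent runs of $\mathcal{A}$ on disjoint partitions, so its law factorises.
\begin{itemize}
\item For $k\neq s$, the factors are identical under $\D$ and $\D'$.
\item For $k=s$, the factor satisfies the $((1-\alpha)\epsilon,\delta)$ guarantee of $\mathcal{A}$.
\end{itemize}
For a measurable set $S$ in the product space, I will condition on the outputs of the unchanged runs and apply the DP inequality to the $s$-th run, then integrate. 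The constant $\delta$ survives integration against a probability measure, which yields $((1-\alpha)\epsilon,\delta)$-DP for the whole loop.

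Two subtleties need checking here. First, the partition is determined by the public value of the sensitive attribute of the added record, so the argument holds uniformly over which $s$ is affected. Second, the definition in the paper is stated for addition only ($D'=D\cup\{x\}$); I will keep that one-sided form and verify the inequality in the direction the definition states.

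\textbf{Composition and post-processing.} The noise in the count step is independent of the randomness of $\mathcal{A}$. Basic sequential composition therefore gives $(\alpha\epsilon+(1-\alpha)\epsilon,\,0+\delta)=(\epsilon,\delta)$-DP for the pair of outputs $\big((c_k)_k,(\lawsyn_k)_k\big)$. Finally, $\Dsyn$ is a randomised function of this pair that uses fresh randomness independent of $\D$. By the post-processing property, $\Dsyn$ is $(\epsilon,\delta)$-DP with respect to $\D$.
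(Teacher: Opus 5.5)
Your proposal is correct and follows the paper's proof: the Laplace mechanism gives $\alpha\epsilon$-DP for the counts, parallel composition over the disjoint subsets $\D_k$ gives $((1-\alpha)\epsilon,\delta)$, sequential composition gives $(\epsilon,\delta)$, and post-processing covers the normalisation, the multinomial draw and the sampling. Your extra details, namely the $\ell_1$-sensitivity check under add-one neighbours and the conditioning argument for parallel composition, spell out the standard results the paper cites.
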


\begin{proof}
The proof relies on standard results for DP~\cite{dp:dwork-roth}.
Noisy sensitive group proportions satisfy $\alpha \epsilon$-DP
thanks to the use of the Laplace mechanism.
The set of group-wise distributions learned on disjoint subsets of $\D$
with an $((1 - \alpha)\epsilon, \delta)$-DP algorithm is also
$((1 - \alpha)\epsilon, \delta)$-DP thanks to parallel composition.
Combining previous results provides $(\epsilon, \delta)$-DP guarantees thanks
to sequential composition.
Sampling preserves these guarantees thanks to the post-processing theorem.
\end{proof}

For the SDG methods we consider, this group-wise modeling strategy has a
broader hypothesis class than the wrapped method, hence reducing the potential
for approximation errors.
It notably excludes the possibility to select graphs in which
$\lawsyn(X) \indep S$, but leaves the possibility to model independence
when estimated conditional probabilities are the same across groups.
For PrivBayes and GreedyBayes, it also enables selecting group-wise graphs
that could not be fused into a valid bayesian network and are therefore
outside the hypothesis class of the base method.

On the other hand,
learning group-wise models results in a loss of statistical power when
estimating statistics that do not depend on $S$. It may therefore hinder
the utility of synthetic data. This impact may be disparate across groups,
notably when they have different proportions. Indeed, estimating the same
statistic over various $\D_k$ will result in distinct values either due to
sampling errors (proportional to the group size) or to DP-induced noise
(centered and stochastic, but with a noise-to-signal ratio that also
depends on group sizes).

Consequently, group-wise modeling is not expected to mitigate all cases of
disparate impact, but rather to introduce a trade-off between estimation and
approximation errors.
Thus we introduce it as a baseline to estimate this trade-off
empirically, and to mitigate disparate impact in settings where approximation
and algorithm-based estimation errors dominate sample-size-based ones.

\subsection{Controlled Distributions}

In order to illustrate and study structural causes of utility disparity,
we craft some distributions with controlled properties that we suspect
may cause SDG methods to have disparate impact.

We first introduce a family of distributions by taking an arbitrary
bayesian network graphical structure over 6 binary attributes, including a
sensitive attribute as root node.
Figures~\ref{fig:toy-graph:base-1} and~\ref{fig:toy-graph:base-0} present the
minimal graphs to represent $\lawdat_1$ and $\lawdat_0$ respectively.
The former is also the minimal graph to represent $\lawdat$.
It has a maximum in-degree of 2 (meaning that attributes are conditionally
dependent on at most 2 other attributes), and is thus learnable by all
considered SDG methods except MST. Group-wise conditional graphs have the same
maximum in-degree, but for $S = 1$ there is one more edge, which we expect to
put it at a disadvantage when using MST. We fix $P(S = 1) = 0.5$.

We define the previous setting as the \textit{base} one, and derive 3
additional settings by altering its distribution.
In the \textit{fewer-samples} setting, we set $P(S = 1) = 0.1$, in order to
study how group proportions can cause estimation errors.
In the \textit{higher-complexity} setting, illustrated by figure~\ref{%
fig:toy-graph:high}, we add an edge to the graph that results in a structural
difference of complexity between conditional distributions: in this setting,
$\lawdat$ and $\lawdat_1$ have a maximum in-degree of 3, whereas $\lawdat_0$
has a maximum in-degree of 2. GreedyBayes$(\rho=2)$ is therefore bound to
suffer an approximation error for $\lawdat$ and $\lawdat_1$.
In the \textit{double-disadvantage} setting, we combine modifications from both
previous ones, so that $\lawdat_1$ has both higher complexity and fewer samples
than $\lawdat_0$.

We instantiate these settings by drawing random conditional probabilities that
define specific distributions $\lawdat$. We draw 10 base distributions and
apply setting-defining modifications on them to ensure comparability across
settings. We sample 40 000 records per resulting distribution, on which
SDG is performed.

\tikzstyle{node} = [
  circle, draw, text width=1em, text centered, minimum height=1em
]
\begin{figure}[t]
  \centering
  \begin{subfigure}[t]{0.3\textwidth}
    \begin{tikzpicture}[node distance=0.5em and 0.5em]
      \node (S) at (0,0) [node]  {$S$};
      \node (A) [node, below left=of S] {$A$};
      \node (B) [node, below left=of A] {$B$};
      \node (C) [node, below right=of B] {$C$};
      \node (D) [node, below right=of S] {$D$};
      \node (E) [node, right=2em of C] {$E$};
      \draw[-{Latex[length=0.5em]}] (S) -- (A);
      \draw[-{Latex[length=0.5em]}] (A) -- (B);
      \draw[-{Latex[length=0.5em]}] (A) -- (C);
      \draw[-{Latex[length=0.5em]}] (B) -- (C);
      \draw[-{Latex[length=0.5em]}] (S) -- (D);
      \draw[-{Latex[length=0.5em]}] (D) -- (E);
      \draw[-{Latex[length=0.5em]}] (S) -- (E);  
    \end{tikzpicture}
    \caption{base for $S=1$}
    \label{fig:toy-graph:base-1}
  \end{subfigure}%
  ~
  \begin{subfigure}[t]{0.3\textwidth}
    \begin{tikzpicture}[node distance=0.5em and 0.5em]
      \node (S) at (0,0) [node]  {$S$};
      \node (A) [node, below left=of S] {$A$};
      \node (B) [node, below left=of A] {$B$};
      \node (C) [node, below right=of B] {$C$};
      \node (D) [node, below right=of S] {$D$};
      \node (E) [node, right=2em of C] {$E$};
      \draw[-{Latex[length=0.5em]}] (S) -- (A);
      \draw[-{Latex[length=0.5em]}] (A) -- (B);
      \draw[-{Latex[length=0.5em]}] (A) -- (C);
      \draw[-{Latex[length=0.5em]}] (B) -- (C);
      \draw[-{Latex[length=0.5em]}] (S) -- (D);
      \draw[-{Latex[length=0.5em]}] (D) -- (E);
    \end{tikzpicture}
    \caption{base for $S=0$}
    \label{fig:toy-graph:base-0}
  \end{subfigure}%
  ~
  \begin{subfigure}[t]{0.3\textwidth}
    \begin{tikzpicture}[node distance=0.5em and 0.5em]
      \node (S) at (0,0) [node]  {$S$};
      \node (A) [node, below left=of S] {$A$};
      \node (B) [node, below left=of A] {$B$};
      \node (C) [node, below right=of B] {$C$};
      \node (D) [node, below right=of S] {$D$};
      \node (E) [node, right=2em of C] {$E$};
      \draw[-{Latex[length=0.5em]}] (S) -- (A);
      \draw[-{Latex[length=0.5em]}] (A) -- (B);
      \draw[-{Latex[length=0.5em]}] (A) -- (C);
      \draw[-{Latex[length=0.5em]}] (B) -- (C);
      \draw[-{Latex[length=0.5em]}] (S) -- (D);
      \draw[-{Latex[length=0.5em]}] (D) -- (E);
      \draw[-{Latex[length=0.5em]},dashed] (S) -- (E);  
      \draw[-{Latex[length=0.5em]}] (C) -- (E);  
    \end{tikzpicture}
    \caption{higher-degree variants}
    \label{fig:toy-graph:high}
  \end{subfigure}%
  \caption{Graphical structure of controlled distributions}
  \label{fig:toy-graph}
\end{figure}

\subsection{American Community Survey Data}

In order to illustrate a real-world case of SDG unfairness, we run experiments
on American Community Survey (ACS) data, acquired using
the folktables library~\cite{misc:folktables} and pre-processed to define an
income-level-prediction binary task.

We restrict the dataset to records from the five most-populated States
(California, Texas, Florida, New-York and Pennsylvania), and from the two
majority ethnic groups: caucasian (white) and african-american (black),
making up for nearly 490 000 records.
We define sensitive attributes as gender (male / female) and ethnicity
(white / black), resulting in four sensitive groups that have heterogeneous
proportions: gender groups are close to being balanced, but there are about
ten times more white than black people in this dataset.

We randomly split the ACS dataset into three subsets, respectively denoted
as training (80~\% of records), validation (10~\%) and testing (10~\%) sets.
Details on how they are used to train downstream classifiers are provided in
Appendix~\ref{appendix:setup-acs}.
To ensure robustness, we draw 10 such splits, conduct our
experiments on each, and compute the
average and standard deviation of results across replicas.
For each replica, SDG methods produce 500 000 records based on the training set.

\subsection{Utility metrics}

We consider two types of utility metrics to assess the quality of the
synthetic datasets and their disparity across sensitive groups.

For both controlled and ACS data, we compute distances between empirical
probability distributions, in the vein of Equation~\ref{eq:distance-parity}.
In controlled settings, we report the total variation distance (TVD), defined as
$\textnormal{TVD}(P, Q) =
  \frac{1}{2} \sum_{\omega \in \Omega} |P(\omega) - Q(\omega)|$
for probability mass functions $P$ and $Q$ over the event set $\Omega$.
For ACS data, TVD is somewhat unpractical because the total domain size of
joint distributions is very high. We therefore focus our analysis on the
average of TVD values for all $n$-way marginals, as was notably done
in~\cite{eval:chen2025}. We use $n \in \{1, \dots, 4\}$.

For ACS data, we also consider the difference of predictive utility between
binary classifiers learned either on $\D$ or $\Dsyn$ and evaluated on a test
set $\Dtst \sim \lawdat$, following Equation~\ref{eq:utility-parity}.
We report the area under the receiver-operator curve (AUROC), and refer to
our appendices for additional metrics (area under the recall-precision curve,
accuracy and balanced accuracy).

\section{Experimental Results and Discussion}\label{sec:results}

In this section, we present key experimental results and discuss their
implications for causes of disparate impact of SDG. We first present
results on controlled distributions, then move on to the real-world ACS data.

\subsection{Results on Controlled Distributions}

Results on controlled distributions enable us to assess and formulate
hypotheses about structural causes for disparate impact. In Figures~\ref{fig:toy-tvd-base} and \ref{fig:toy-tvd-dp}
we report $\delta_{\textnormal{TVD}} := \textnormal{TVD}(\Dsyn_1, \D_1)
- \textnormal{TVD}(\Dsyn_0, \D_0)$ values, that measure the disparate fidelity
of a synthetic dataset. High absolute values of $\delta_{\textnormal{TVD}}$
indicate disparate impact.
Positive values indicate that group $S$=1 is disadvantaged, and conversely.
We use boxplots of the distribution of example-wise
$\delta_{\textnormal{TVD}}$ values for each method and setting,
so as to illustrate the variance of $\delta_{\textnormal{TVD}}$ across
examples.
Additional figures that extensively cover the studied methods
are provided in Appendix~\ref{appendix:tvd-toy}.

We note that results sometimes vary extensively across examples within
a setting, which is due to the randomness of the values of conditional
probabilities that define example distributions in addition to their shared
graphical structure.
Notionally, if a SDG method does not model the dependency of $B$ on $A$, the
resulting error for marginal probabilities of $\lawsyn$ relative to $\lawdat$
depends on how much $\lawdat(B|A=1)$ and $\lawdat(B|A=0)$ values actually
differ.
Conversely, these values may influence whether the graph-selection process of
SDG methods succeeds in learning the underlying structure of $\lawdat$.
Notwithstanding, we can safely compare settings due to examples using the
same random conditionals across them.

\subsubsection{Disparate Impact}

We first compare non-DP, population-wide methods, including
PrivBayes($\epsilon=100$), that has loose privacy guarantees.
Figure~\ref{fig:toy-tvd-base-popwide} displays the distribution of
$\delta_{\textnormal{TVD}}$ across example distributions for each setting
and SDG method. Each subplot corresponds to a method (with PB and GB denoting
PrivBayes and GreedyBayes respectively). The x-axis denotes our four settings
(Base, Fewer-samples, Higher-complexity, Double-disadvantage).
The y-axis is shared by subplots on the same row.
Small circles denote outliers.

\begin{figure}[!t]
  \centering
  \begin{subfigure}[t]{0.5\textwidth}
    \includesvg[width=\textwidth]{plots/toy_tvd_dsp_base_popwide}
    \caption{population-wide modeling}
    \label{fig:toy-tvd-base-popwide}
  \end{subfigure}%
  ~
  \begin{subfigure}[t]{0.5\textwidth}
    \includesvg[width=\textwidth]{plots/toy_tvd_dsp_base_groupwise}
    \caption{group-wise modeling}
    \label{fig:toy-tvd-base-groupwise}
  \end{subfigure}%
  \caption{$\delta_{\textnormal{TVD}}$ of non-private SDG methods PrivBayes (PB) and GreedyBayes (GB) for the settings Base (B), Fewer-samples (F), Higher-complexity (H), Double-disadvantage (D). }
  \label{fig:toy-tvd-base}
\end{figure}

We first observe that \emph{heterogeneous sensitive group proportions cause
disparate estimation errors}. Indeed, $\delta_{\textnormal{TVD}} > 0$ for all
methods in the fewer-samples and double-disadvantage settings (boxplots F and D
for each method), whereas in the other two settings (boxplots B and H) there
are disparities in favor of either group depending on examples.
Even PrivBayes($\epsilon$=100), which achieves near-perfect utility and is
$0.01$-fair for all examples, has a slight disparate impact that systematically
disfavors the minority group.

We also observe that \emph{heterogeneous distribution complexity across groups
causes disparate approximation errors}.
Indeed, MST, which is subject to approximation errors in all settings,
rarely has disparate impact in the base setting, but
near-always does in the higher-complexity setting, in favor of the group
with the simpler distribution. This is also true for
GreedyBayes($\rho$=2), which in the higher-complexity setting can learn
$\lawdat_1$ but not $\lawdat_0$.

We then observe that \emph{heterogeneous group proportions and distribution
complexity are cumulative causes of disparate impact}.
Indeed, we see that methods that suffer approximation errors
in the higher-complexity setting have even more disparate impact in the
double-disadvantage one, which we interpret as estimation and approximation
errors being cumulative.
Furthermore, we observe that non-DP AIM, which does not have disparate impact
in the base nor higher-degree settings, has a more disparate
impact in the double-disadvantage setting than in the fewer-samples one. We
posit that this is due to an algorithmic bias where disparate estimation errors
related to group sizes result in selecting a graphical structure that is
suitable for the majority group but not complex enough for the minority one.

Finally, we posit that \emph{heterogeneous $\lawdat_k$ complexity across groups
can cause disparate estimation errors}. Indeed, in settings where group
proportions are equal (base and higher-degree), GreedyBayes often selects
a graph structure that does not fully match $\lawdat$. While this alternatively
advantages either group, we see that $|\delta_{\textnormal{TVD}}|$ is larger
when group $S$=1 is disadvantaged, as illustrated by the asymmetry of boxplots.
We posit that this is due to $\lawdat_1$ having higher complexity, so that graph
estimation errors have more impact on utility.

\begin{figure}[!t]
  \centering
  \begin{subfigure}[t]{0.5\textwidth}
    \includesvg[width=\textwidth]{plots/toy_tvd_dsp_dp_popwide}
    \caption{population-wide modeling}
    \label{fig:toy-tvd-dp-popwide}
  \end{subfigure}%
  ~
  \begin{subfigure}[t]{0.5\textwidth}
    \includesvg[width=\textwidth]{plots/toy_tvd_dsp_dp_groupwise}
    \caption{group-wise modeling}
    \label{fig:toy-tvd-dp-groupwise}
  \end{subfigure}%
  \caption{$\delta_{\textnormal{TVD}}$ of AIM and PrivBayes (PB)
  for various DP budgets  for the settings Base (B), Fewer-samples (F), Higher-complexity (H), Double-disadvantage (D).}
  \label{fig:toy-tvd-dp}
\end{figure}

\subsubsection{Impact of DP}

Figure~\ref{fig:toy-tvd-dp-popwide} represents distributions of
$\delta_{\textnormal{TVD}}$ for AIM and PrivBayes for various DP budgets,
with a similar formalism as Figure~\ref{fig:toy-tvd-base-popwide}.
It illustrates that \emph{DP has disparate impact when group proportions are
heterogeneous}, with both PrivBayes and AIM advantaging the majority group in
the fewer-results and double-disadvantage settings.
We posit that this is due to heterogeneous signal-to-noise ratio across groups,
resulting in disparate estimation errors.

Furthermore, we observe that \emph{DP can have disparate impact when
distribution complexity is heterogeneous}. Indeed, AIM($\epsilon$=1) has
disparate impact in the higher-complexity setting whereas its non-DP
counterpart does not. We posit that this is due to DP influencing the
graph-selection process, which is more likely to be harmful for groups
with a more complex distribution. We note that PrivBayes does not suffer
from this effect, which highlights that method specificities can cause
disparate impact in conjunction with distribution properties.

We remark that AIM is subject to another form of DP-induced algorithmic bias:
in our controlled settings, when using $\epsilon \in \{10, 100\}$, an annealing
mechanism that is specific to AIM is triggered, resulting in an early-exit that
yields an under-informed $\lawsyn$ that has lower utility and more disparate
impact favoring $S$=1 than in the non-DP and high-privacy regimes.

Finally, we report that DP has no visible impact on MST, which we interpret as
approximation errors dominating estimation ones for this simple model.

\subsubsection{Group-wise modeling}

Figure~\ref{fig:toy-tvd-base-groupwise} represents distributions of
$\delta_{\textnormal{TVD}}$ when using group-wise modeling with the
same base methods as presented in Figure~\ref{fig:toy-tvd-base-popwide}.
Complementarily, Figure~\ref{fig:toy-tvd-dp-groupwise} presents results
for group-wise modeling with AIM and PrivBayes for various DP budgets.

Generally-speaking, we observe from the comparison of Figure~\ref{%
fig:toy-tvd-base-popwide} and Figure~\ref{fig:toy-tvd-base-groupwise}
that \emph{group-wise modeling reduces the disparate impact of SDG}.
We notably remark that in the double-disadvantage setting, results for
group-wise modeling using non-DP AIM, GreedyBayes and PrivBayes($\epsilon$=100)
are similar to those in the fewer-samples setting, contrary to what was
observed with population-wide modeling.
We posit that this is due to group-wise modeling preventing algorithms from
selecting a distribution structure matching only the majority group. Therefore,
we believe that the low disparate impact of group-wise SDG methods are only due
to disparate estimation errors of marginal statistics caused by heterogeneous
group proportions, whereas the disparate impact of population-wide methods
is also due to disparate estimation errors in the graph-selection process.

However, we also observe that \emph{DP has a strong disparate impact on
group-wise modeling}. Indeed, in the high-privacy regime, group-wise modeling
achieves lesser utility, and has disparate impact disfavoring the minority
group when proportions are heterogeneous. This is explained by the
heterogeneous signal-to-noise ratio across groups, and by the fact that
statistics common to multiple $\lawsyn_k$ are estimated multiple times with
varying statistical power.

\subsection{Results on ACS Data}

Results on ACS data enable us to illustrate that disparate impact can happen
in real-world applications, and to identify practical cases of the phenomena
that we analyzed using artificial data in controlled settings.

\subsubsection{Disparate Impact on Distribution Fidelity}

\begin{table}[!t]
\centering
\caption{
  Average TVD between 2-way marginals of synthetic and real ACS data,
  computed for each and every SDG method and sensitive group: White Male (WM),
  White Female (WF), Black Male (BM), Black Female (BF).
}
\label{table:tvd-2way-acs-main}
\begin{tabularx}{0.95\textwidth}{|ll|CCCC|CCCC|}
\cline{1-10} &  & \multicolumn{4}{c|}{population-wide SDG} & \multicolumn{4}{c|}{groupwise SDG} \\
 &  & WM & WF & BM & BF & WM & WF & BM & BF \\
\cline{1-10}\multirow[c]{4}{*}{AIM} & $\epsilon$=1 & 2.86 & 3.03 & 8.60 & 8.80 & 2.57 & 2.82 & 8.12 & 8.51 \\
 & $\epsilon$=10 & 1.95 & 2.12 & 8.22 & 8.10 & 1.41 & 1.40 & 4.44 & 4.59 \\
 & $\epsilon$=100 & 2.70 & 3.08 & 11.04 & 11.11 & 1.41 & 1.54 & 2.92 & 2.63 \\
 & $\epsilon$=$\infty$ & 3.17 & 3.48 & 12.03 & 11.21 & 1.55 & 1.72 & 2.98 & 3.16 \\
\cline{1-10}
\multirow[c]{3}{*}{GreedyBayes} & $\rho$=2 & 5.69 & 5.14 & 15.58 & 13.81 & 3.38 & 3.41 & 4.52 & 4.22 \\
 & $\rho$=3 & 4.18 & 4.12 & 11.15 & 12.65 & 2.26 & 2.08 & 3.91 & 3.67 \\
 & $\rho$=4 & 3.25 & 3.42 & 10.06 & 10.03 & 1.48 & 1.29 & 4.01 & 2.83 \\
\cline{1-10}
\multirow[c]{4}{*}{MST} & $\epsilon$=1 & 7.85 & 7.19 & 16.32 & 16.59 & 5.62 & 7.52 & 16.42 & 15.43 \\
 & $\epsilon$=10 & 7.57 & 6.96 & 16.05 & 16.34 & 4.84 & 5.10 & 7.80 & 7.13 \\
 & $\epsilon$=100 & 7.59 & 6.99 & 16.00 & 16.41 & 4.81 & 5.02 & 6.94 & 6.36 \\
 & $\epsilon$=$\infty$ & 7.55 & 6.94 & 16.04 & 16.39 & 4.81 & 5.00 & 6.91 & 6.32 \\
\cline{1-10}
\multirow[c]{3}{*}{PrivBayes} & $\epsilon$=1 & 5.72 & 5.81 & 16.12 & 14.99 & 5.64 & 6.02 & 10.78 & 10.93 \\
 & $\epsilon$=10 & 5.30 & 5.15 & 14.05 & 13.28 & 3.64 & 3.95 & 6.83 & 7.22 \\
 & $\epsilon$=100 & 3.93 & 4.42 & 14.31 & 13.74 & 3.16 & 3.55 & 6.90 & 6.07 \\
\cline{1-10}
\end{tabularx}
\end{table}

We report in Table~\ref{table:tvd-2way-acs-main} the
group-wise average TVD between the 2-way empirical marginals of synthetic
and real data for all considered SDG methods, scaled by 100. Higher TVD values
indicate lower fidelity of $\Dsyn_k$ to $\D_k$. Disparate impact is
characterized by discrepancies between the four group-wise values of any
given method.
TVD metrics for other $n$-way marginals and over the full distribution
are reported in Appendix~\ref{appendix:tvd-acs}.
We do not report standard deviations, that are of low magnitude.

These results show that all SDG methods have a disparate impact on the fidelity
of $\Dsyn_k$ to $\D_k$ that disadvantages black people. Methods with a broader
hypothesis class tend to have both better utility and less disparate impact.
For most methods, the group-wise modeling strategy greatly improves utility
for all groups and reduces disparity, but does not prevent it. Furthermore,
it does worse than population-wide modeling in the high-privacy regime
($\epsilon=1$).

These results are coherent with the expectation that broadening the hypothesis
class of SDG methods reduces the approximation error for all groups, and that
estimation errors put groups with fewer samples at a disadvantage. We also
observe the disparate impact of DP on groups of different sizes in
high-privacy regimes, which is the strongest with group-wise modeling.

\subsubsection{Disparate Impact on Downstream Utility}

\begin{table}[t!]
\centering
\caption{
  Decrease of AUROC when training from synthetic rather than real ACS data,
  evaluated on a real-world testing set and averaged across replicas,
  computed for each and every SDG method and sensitive group: all groups (all),
  White Male (WM), White Female (WF), Black Male (BM), Black Female (BF).
}
\label{table:auroc-diff-acs-main}
\begin{tabularx}{0.95\textwidth}{|ll|CCCCC|CCCCC|}
\cline{1-12} &  & \multicolumn{5}{c|}{population-wide SDG} & \multicolumn{5}{c|}{groupwise SDG} \\
 &  & all & WM & WF & BM & BF & all & WM & WF & BM & BF \\
\cline{1-12}\multirow[c]{4}{*}{AIM} & $\epsilon$=1 & 1.34 & 1.20 & 1.56 & 2.04 & 1.91 & 1.47 & 1.36 & 1.56 & 2.55 & 2.71 \\
 & $\epsilon$=10 & 0.90 & 0.77 & 1.05 & 1.37 & 1.27 & 0.68 & 0.62 & 0.77 & 1.21 & 0.75 \\
 & $\epsilon$=100 & 1.00 & 0.90 & 1.09 & 1.38 & 1.40 & 0.56 & 0.55 & 0.63 & 0.91 & 0.49 \\
 & $\epsilon$=$\infty$ & 0.91 & 0.74 & 1.08 & 1.34 & 1.48 & 0.88 & 0.68 & 1.18 & 1.41 & 0.82 \\
\cline{1-12}
\multirow[c]{3}{*}{GreedyBayes} & $\rho$=2 & 3.69 & 3.85 & 3.81 & 4.77 & 4.66 & 3.57 & 3.70 & 3.56 & 6.33 & 5.50 \\
 & $\rho$=3 & 4.86 & 5.42 & 3.89 & 6.14 & 4.55 & 3.04 & 2.74 & 3.35 & 8.26 & 4.01 \\
 & $\rho$=4 & 2.91 & 3.06 & 2.93 & 4.02 & 3.52 & 1.18 & 1.37 & 1.00 & 1.94 & 2.02 \\
\cline{1-12}
\multirow[c]{4}{*}{MST} & $\epsilon$=1 & 9.39 & 7.93 & 10.35 & 8.03 & 9.95 & 9.05 & 7.33 & 12.79 & 16.67 & 11.92 \\
 & $\epsilon$=10 & 9.39 & 7.89 & 10.37 & 8.05 & 9.97 & 8.97 & 7.30 & 12.65 & 15.91 & 11.58 \\
 & $\epsilon$=100 & 9.39 & 7.91 & 10.32 & 7.98 & 9.93 & 9.01 & 7.30 & 12.78 & 16.28 & 11.62 \\
 & $\epsilon$=$\infty$ & 9.40 & 7.94 & 10.36 & 8.02 & 9.93 & 8.97 & 7.29 & 12.69 & 16.17 & 11.43 \\
\cline{1-12}
\multirow[c]{3}{*}{PrivBayes} & $\epsilon$=1 & 2.30 & 2.23 & 2.44 & 2.50 & 2.31 & 2.33 & 2.21 & 2.46 & 3.84 & 3.28 \\
 & $\epsilon$=10 & 1.44 & 1.36 & 1.60 & 2.41 & 1.80 & 1.25 & 1.23 & 1.26 & 2.50 & 1.76 \\
 & $\epsilon$=100 & 1.67 & 1.53 & 1.89 & 2.18 & 2.28 & 1.07 & 0.85 & 1.12 & 3.41 & 2.82 \\
\cline{1-12}
\end{tabularx}
\end{table}

We report in Table~\ref{table:auroc-diff-acs-main} the population-wide and
group-wise difference of AUROC between a classifier learned from real-world data
and one learned from a synthetic dataset, evaluated on a real-world testing
set, averaged across replicas and scaled by 100.
Positive values indicate a decrease of AUROC (hence, of utility) when using
synthetic rather than real-world training data. Disparate impact is
characterized by discrepancies between the four group-wise values of any
given method. The impact over the entire population is also reported for
reference.
Similar results for additional evaluation metrics are reported in
Appendix~\ref{appendix:clf-acs}, where utility values can also be found.
We do not report standard deviations, that are of low magnitude.

These results show that SDG methods have a variable disparate impact on the
utility of downstream classifiers learned from synthetic rather than real-world data.
The loss of utility is usually higher for black than white people, but the
ordering of groups from least- to most-impacted varies across methods and
sometimes across parameter-variants of a method.
Broadening the hypothesis class of population-wide methods has a clear effect
on preserving the downstream utility for all groups, and usually reduces the
disparate impact across groups.

Group-wise modeling exhibits contrasted results. For AIM,
which is the method achieving the best utility and least disparate impact,
group-wise modeling further improves both utility and fairness, even making
black women slightly advantaged compared to white women and black men.
Conversely, for most other methods, the group-wise modeling strategy
increases disparate impact, by reducing the loss of utility for some groups
(white men, sometimes white women) and increasing it for others (sometimes
disparately across disadvantaged groups).

Results for population-wide modeling methods are coherent with our expectations
and with results on distances across synthetic and real distributions.
For group-wise modeling, we hypothesize that its disparate impact across
groups arises from the reduction of estimation and approximation errors
objectified by TVD metrics being only partial, so that group-wise models
may miss some information that is better-modeled by population-wide ones
and that would be more useful to learn the classifiers.
At any rate, our results illustrate that utility metrics for SDG are not
necessarily aligned. This motivates our formalization of disparate impact
for SDG as a broad framework, to be instantiated for utility metrics that
depend on the application context.

\section{Conclusion}

In this paper, we have introduced a new fairness definition for synthetic data
generation (SDG), by revisiting the well-established fairness notion of
disparate impact. We define SDG to be fair when the output synthetic data is
of equal utility across sensitive groups.
We believe that this problem is both under-explored and crucial for real-world
applications of SDG as a privacy-preserving technology.
We hope that formalizing it can influence both researchers to work on
understanding and addressing unfairness, and real-world practitioners
to consider fairness when assessing the utility of synthetic data.

We illustrated that SDG methods that rely on a probabilistic graphical model
(PGM) can have disparate impact and explored reasons why they do, using both
artificial and real-world data.
Our experiments on distributions with controlled properties enabled us to
validate that disparate impact can have multiple and cumulative sources.
We showed that unbalanced samples sizes across groups can cause disparate
estimation errors, and can be amplified by differential privacy (DP)
mechanisms.
Furthermore, we showed that heterogeneous distribution complexity, in the
sense that some group-conditional distributions are easier to learn than others
by the SDG method, can cause disparate impact, even when the true distribution
is part of the hypothesis class of that method. We highlighted that the methods
we studied are subject to graph-selection estimation errors, that can be
influenced by group proportions and DP, but also arise from method-specific
algorithms.

Additionally, we introduced a meta-algorithm for SDG that relies on learning
group-wise models.
Our experiments demonstrated that this strategy tends to decrease the disparate
impact of SDG methods by increasing utility for otherwise-disadvantaged
groups, which appears to be due to avoiding some graph estimation errors.
However, this strategy is prone to disparate estimation errors, notably when
using DP, and appears to be unsuited for high-privacy regimes.

Finally, our experiments on real-world ACS data highlighted that while disparate
impact is prone to occur, it may vary across considered utility metrics, as does
the overall utility.
This highlights the importance for practitioners to evaluate synthetic data
generation in coherence with its planned uses, which are highly-contextual.
Similarly, we suggest that the tolerance to unfairness should be set in
accordance with the application context and the potential for harm it creates.
While this may appear to contradict the selling point of synthetic data as a
task-agnostic privacy-preserving solution for data release, we argue that it
is coherent with the fact that properties of the modeled data distribution are
variously useful depending on applications. For instance, if a feature that is
uncorrelated with the target one is badly modeled, it is likely to have low or
no impact on the utility of downstream ML models, whereas it may make the data
appear unrealistic or even prejudiced to the eyes of human practitioners.

\subsubsection{Future work}

Our work paves the way towards future research that may address how to prevent
or limit disparate impact in SDG. Our group-wise strategy acts as a natural
baseline, that may notably be challenged by leveraging the statistical power of
the complete dataset rather than processing group-wise subsets in a
merely-parallel way.
Another possible line of work would be to understand more precisely why methods
sometimes fail to select a PGM suitable for all groups despite the true
distribution being part of their hypothesis class, and to design improved
graph-selection heuristics.
Finally, our fairness definition may be applied to SDG methods that are not
based on PGMs, and to data beyond the tabular categorical setting that we used.

Tackling SDG methods that rely on deep generative models is bound to call for
a renewed analysis of the mechanisms underlying disparate impact. While
we expect the decomposition of disparate impact sources between approximation
and various kinds of estimation errors to remain relevant and to ground similar
conclusions as those we drew for PGM-based methods, we also expect this
decomposition to be harder to objectify. Indeed, most deep generative models
lack an explicit interpretable hypothesis class, and learn a transform between
an arbitrary input distribution and an output synthetic distribution rather
than learning the latter explicitly, making it harder to reason about which
distributions are learnable by the methods and to ensure which dependencies
across attributes have been modeled.

\begin{credits}
  \subsubsection{\ackname}
  This work was supported by the ANR 22-CMAS-0009 CAPS'UL
  (CAmpus Participatif en Santé numérique du Site Universitaire de Lille)
  project of the France 2030 AMI-CMA, and the ANR 22-PECY-0002 IPOP
  (Interdisciplinary Project on Privacy) project of the Cybersecurity PEPR.
  Michaël Perrot is supported by the French National Research Agency (ANR)
  through the grant ANR-23-CE23-0011-01 (Project FaCTor).

  \subsubsection{\discintname}
  The authors have no competing interests to declare that are relevant to the
  content of this article.
\end{credits}

\bibliographystyle{splncs04}
\bibliography{bibliography}

\clearpage
\appendix

\section{Experimental Setup Details}\label{appendix:setup}

In this section, we provide some specific methodological details as to our
experiments. Note that we also provide our entire source code as supplementary
material, including backend SDG methods, experiments-defining scripts, random
seeds that we used and scripts that produced the tables and figures presented
in the paper and its appendices.

\subsection{SDG methods parameters}\label{appendix:setup-sdg}

We set AIM to use a workload based on all possible 3-attributes cliques
and to take measurements over cliques of at most 3 attributes. We use the
default maximum tree size of 80 MB from AIM authors, but limit the number
of planned measurements to $4d$ on real-world data and $2d$ on low-dimensional
artificial data, where $d$ is the number of attributes (respectively 12 and 6
for population-wide models over ACS or artificial data, and 10 and 5 for
group-wise models that do not treat sensitive attributes). We also limit
the computational effort put in selecting a distribution based on noisy
measurements at each step, by running at most 200 gradient descent steps
for ACS data, and 100 ones for artificial data.

We use default parameters for PrivBayes and MST taken from the papers
they were respectfully introduced in.

\subsection{Custom non-DP SDG methods}\label{appendix:setup-nondp}

We use custom implementations for all SDG methods, that are based on both
the papers introducing them and existing reference implementations. This
enables us to implement non-DP variants of MST and AIM.

For MST, we simply use an argmax over scores to select the graph structure
rather than a DP exponential mechanism, and do not add noise to marginals
measured over the training data.

For AIM, we also use an argmax over scores to select the clique over which
to take a measurement at each step, and do not add noise to these measurements.
As a consequence, the annealing mechanism from the base algorithm is disabled,
and the planned number of measurements is always taken unless the junction
tree becomes oversized. We keep the rest of the algorithm structure unchanged,
including the use of gradient descent to select a distribution that matches
measured marginals. We use exact total measurements, and pass an arbitrary
mock value of $\sigma = 1$ for all measurements that result in the minimized
loss not being re-scaled.

\subsection{Downstream Classifiers on ACS Data}\label{appendix:setup-acs}

As mentioned in the paper, each replicas of the ACS dataset is defined by a
unique, uniformly-random split of the base dataset (roughly 490 000 records)
into three sets, respectively labeled as training (80~\% of records),
validation (10~\% of records) and test (10~\% of records) sets.

For each replica, SDG methods are run to produce 500 000 synthetic records
based on the training set. XGBoost binary classifiers are then learned either
from real-world or synthetic data.
When using real-world data, a grid-search is used to tune a regularization
parameter based on generalization AUROC over the validation set. The selected
parameter is used to train a final classifier from the union of training and
validation sets.
When using synthetic data, a similar grid-search is performed using 20~\% of
synthetic records for validation. The final classifier is trained on the full
synthetic dataset.

The XGBoost classifiers are implemented using the \texttt{xgboost} Python
library, with default hyper-parameters save for the learning rate (set to 0.1),
the enabling of processing categorical variables as such, the scaling of
the loss by $P(Y = 0) / P(Y = 1)$ to account for label imbalance (using
estimates from the training data, either synthetic or real) and the L2
regularization constant, which is selected using a grid-search among candidate
values $[0, 1, 4, 8, 16, 32]$.

\newpage

\section{Expanded Results on Controlled Distributions}\label{appendix:tvd-toy}

In this section, we report exhaustive results on disparate impact fairness
assessed based on distances between empirical distributions of synthetic and
artificial data sampled from controlled distributions.

We report $\delta_{\textnormal{TVD}} := \textnormal{TVD}(\Dsyn_1, \D_1)
- \textnormal{TVD}(\Dsyn_0, \D_0)$ values, that measure the disparate utility
of a synthetic dataset. Disparate impact is characterized by high absolute
values of $\delta_{\textnormal{TVD}}$. Positive values indicate that
group $S$=1 is disadvantaged, and conversely.

We provide boxplots that represent the distribution of
$\delta_{\textnormal{TVD}} = $ across example distributions for variants of
a given SDG method across controlled settings.
Each subplot corresponds to a controlled setting.
The x-axis denotes variants of a SDG method.
The y-axis is shared by subplots on the same row.
Small circles denote outliers.

We present results for
AIM (Figure~\ref{fig:toy-tvd-aim-popwide}),
group-wise AIM (Figure~\ref{fig:toy-tvd-aim-groupwise}),
PrivBayes (Figure~\ref{fig:toy-tvd-privbayes-popwide}),
group-wise PrivBayes (Figure~\ref{fig:toy-tvd-privbayes-groupwise}),
GreedyBayes (Figure~\ref{fig:toy-tvd-greedybayes-popwide}),
group-wise GreedyBayes (Figure~\ref{fig:toy-tvd-greedybayes-groupwise}),
MST (Figure~\ref{fig:toy-tvd-mst-popwide}),
and group-wise MST (Figure~\ref{fig:toy-tvd-mst-groupwise}).

\begin{figure}
  \centering
  \includesvg[width=0.8\textwidth]{plots/toy_tvd_dsp_aim_popwide}
  \caption{$\delta_{\textnormal{TVD}}$ for population-wide AIM methods}
  \label{fig:toy-tvd-aim-popwide}
\end{figure}

\begin{figure}
  \centering
  \includesvg[width=0.8\textwidth]{plots/toy_tvd_dsp_aim_groupwise}
  \caption{$\delta_{\textnormal{TVD}}$ for group-wise AIM methods}
  \label{fig:toy-tvd-aim-groupwise}
\end{figure}

\begin{figure}
  \centering
  \includesvg[width=0.8\textwidth]{plots/toy_tvd_dsp_pb_popwide}
  \caption{$\delta_{\textnormal{TVD}}$ for population-wide PrivBayes methods}
  \label{fig:toy-tvd-privbayes-popwide}
\end{figure}

\begin{figure}
  \centering
  \includesvg[width=0.8\textwidth]{plots/toy_tvd_dsp_pb_groupwise}
  \caption{$\delta_{\textnormal{TVD}}$ for group-wise PrivBayes methods}
  \label{fig:toy-tvd-privbayes-groupwise}
\end{figure}

\begin{figure}
  \centering
  \includesvg[width=0.8\textwidth]{plots/toy_tvd_dsp_gb_popwide}
  \caption{$\delta_{\textnormal{TVD}}$ for population-wide GreedyBayes methods}
  \label{fig:toy-tvd-greedybayes-popwide}
\end{figure}

\begin{figure}
  \centering
  \includesvg[width=0.8\textwidth]{plots/toy_tvd_dsp_gb_groupwise}
  \caption{$\delta_{\textnormal{TVD}}$ for group-wise GreedyBayes methods}
  \label{fig:toy-tvd-greedybayes-groupwise}
\end{figure}

\begin{figure}
  \centering
  \includesvg[width=0.8\textwidth]{plots/toy_tvd_dsp_mst_popwide}
  \caption{$\delta_{\textnormal{TVD}}$ for population-wide MST methods}
  \label{fig:toy-tvd-mst-popwide}
\end{figure}

\begin{figure}
  \centering
  \includesvg[width=0.8\textwidth]{plots/toy_tvd_dsp_mst_groupwise}
  \caption{$\delta_{\textnormal{TVD}}$ for group-wise MST methods}
  \label{fig:toy-tvd-mst-groupwise}
\end{figure}

\newpage

\section{Expanded Results on ACS Data}

\subsection{Synthetic Distribution Fidelity}\label{appendix:tvd-acs}

In this section, we report results on disparate impact fairness assessed based
on distances between empirical distributions of synthetic and real-world ACS
data.

We report the group-wise average TVD between the $n$-way empirical marginals
of synthetic and real data for all considered SDG methods, scaled by 100.
Higher TVD values indicate lower fidelity of $\Dsyn_k$ to $\D_k$. Disparate
impact is characterized by discrepancies between the four group-wise values
of any given method.
Values for $n=2$ are presented in Table~\ref{table:tvd-2way-acs}
(identical to Table~\ref{table:tvd-2way-acs-main} in the main paper),
values for $n=3$ are in Table~\ref{table:tvd-3way-acs},
and values for $n=4$ are in Table~\ref{table:tvd-4way-acs}.

We also report in Table~\ref{table:tvd-global-acs} the TVD between
synthetic and marginal distributions, averaged across replicas and scaled by
100. Due to the total domain size of the joint distribution, results are not
as easily compared as with $n$-way marginals.

\input{tables/acs_tvd_1way}
\input{tables/acs_tvd_2way}
\input{tables/acs_tvd_3way}
\input{tables/acs_tvd_4way}
\input{tables/acs_tvd_global}

\newpage

\subsection{Downstream Classifier Utility}\label{appendix:clf-acs}

In this section, we report results on disparate impact fairness assessed based
on the loss of utility from training a binary XGBoost classifier from synthetic
data rather than from real-world ACS data.

We present the difference between a utility metric for the model learned on
real data and that for the model learned on synthetic data, evaluated on a
real-world testing dataset, averaged across replicas and scaled by 100.
Negative values indicate a loss of utility when using synthetic rather than
real-world training data. Disparate impact is characterized by discrepancies
between the four group-wise values of any given method.

We report differences for various metrics:
AUROC (Table~\ref{table:auroc-diff-acs}, identical to
Table~\ref{table:auroc-diff-acs-main} from the main paper),
AURPC (Table~\ref{table:aurpc-diff-acs}),
Accuracy (Table~\ref{table:accuracy-diff-acs}),
and Balanced Accuracy (Table~\ref{table:bal_acc-diff-acs}).
The latter two are computed using a binary prediction threshold of 0.5.

For reference, we also report raw utility metrics of the models learned from
synthetic data. We report various metrics:
AUROC (Table~\ref{table:auroc-acs}),
AURPC (Table~\ref{table:aurpc-acs}),
Accuracy (Table~\ref{table:accuracy-acs}),
and Balanced Accuracy (Table~\ref{table:bal_acc-acs}).

\input{tables/acs_auroc_diff}
\input{tables/acs_aurpc_diff}
\input{tables/acs_bal_acc_diff}
\input{tables/acs_accuracy_diff}

\input{tables/acs_auroc}
\input{tables/acs_aurpc}
\input{tables/acs_bal_acc}
\input{tables/acs_accuracy}

\end{document}

%% file: tables/acs_tvd_1way.tex
\begin{table}
\centering
\caption{Average TVD between 1-way marginals of synthetic and real ACS data.}
\label{table:tvd-1way-acs}
\begin{tabularx}{0.95\textwidth}{|ll|CCCC|CCCC|}
\cline{1-10} &  & \multicolumn{4}{c|}{population-wide} & \multicolumn{4}{c|}{groupwise} \\
 &  & WM & WF & BM & BF & WM & WF & BM & BF \\
\cline{1-10}\multirow[c]{4}{*}{AIM} & $\epsilon$=1 & 0.63 & 0.83 & 3.98 & 4.16 & 0.38 & 0.47 & 2.06 & 2.21 \\
 & $\epsilon$=10 & 0.43 & 0.57 & 3.86 & 3.82 & 0.24 & 0.23 & 0.54 & 0.44 \\
 & $\epsilon$=100 & 1.01 & 1.22 & 5.98 & 5.74 & 0.26 & 0.18 & 0.63 & 0.55 \\
 & $\epsilon$=$\infty$ & 1.36 & 1.53 & 6.84 & 5.99 & 0.30 & 0.30 & 0.49 & 0.44 \\
\cline{1-10}
\multirow[c]{3}{*}{GreedyBayes} & $\rho$=2 & 2.23 & 1.97 & 9.19 & 7.58 & 0.38 & 0.40 & 0.83 & 0.73 \\
 & $\rho$=3 & 1.84 & 1.95 & 6.07 & 7.11 & 0.25 & 0.27 & 0.78 & 0.80 \\
 & $\rho$=4 & 1.39 & 1.51 & 5.73 & 5.45 & 0.24 & 0.25 & 1.46 & 0.77 \\
\cline{1-10}
\multirow[c]{4}{*}{MST} & $\epsilon$=1 & 3.07 & 2.76 & 9.30 & 9.55 & 0.90 & 1.92 & 6.90 & 6.64 \\
 & $\epsilon$=10 & 2.95 & 2.70 & 9.21 & 9.35 & 0.28 & 0.31 & 1.44 & 1.20 \\
 & $\epsilon$=100 & 2.96 & 2.72 & 9.21 & 9.43 & 0.30 & 0.26 & 0.81 & 0.77 \\
 & $\epsilon$=$\infty$ & 2.95 & 2.66 & 9.18 & 9.42 & 0.24 & 0.25 & 0.85 & 0.81 \\
\cline{1-10}
\multirow[c]{3}{*}{PrivBayes} & $\epsilon$=1 & 1.99 & 1.98 & 9.20 & 8.86 & 1.85 & 1.91 & 3.27 & 3.55 \\
 & $\epsilon$=10 & 2.30 & 2.13 & 8.07 & 7.33 & 1.10 & 1.22 & 1.80 & 2.30 \\
 & $\epsilon$=100 & 1.70 & 1.98 & 8.44 & 8.05 & 1.10 & 1.25 & 2.45 & 2.01 \\
\cline{1-10}
\end{tabularx}
\end{table}

%% file: tables/acs_tvd_2way.tex
\begin{table}
\centering
\caption{Average TVD between 2-way marginals of synthetic and real ACS data.}
\label{table:tvd-2way-acs}
\begin{tabularx}{0.95\textwidth}{|ll|CCCC|CCCC|}
\cline{1-10} &  & \multicolumn{4}{c|}{population-wide} & \multicolumn{4}{c|}{groupwise} \\
 &  & WM & WF & BM & BF & WM & WF & BM & BF \\
\cline{1-10}\multirow[c]{4}{*}{AIM} & $\epsilon$=1 & 2.86 & 3.03 & 8.60 & 8.80 & 2.57 & 2.82 & 8.12 & 8.51 \\
 & $\epsilon$=10 & 1.95 & 2.12 & 8.22 & 8.10 & 1.41 & 1.40 & 4.44 & 4.59 \\
 & $\epsilon$=100 & 2.70 & 3.08 & 11.04 & 11.11 & 1.41 & 1.54 & 2.92 & 2.63 \\
 & $\epsilon$=$\infty$ & 3.17 & 3.48 & 12.03 & 11.21 & 1.55 & 1.72 & 2.98 & 3.16 \\
\cline{1-10}
\multirow[c]{3}{*}{GreedyBayes} & $\rho$=2 & 5.69 & 5.14 & 15.58 & 13.81 & 3.38 & 3.41 & 4.52 & 4.22 \\
 & $\rho$=3 & 4.18 & 4.12 & 11.15 & 12.65 & 2.26 & 2.08 & 3.91 & 3.67 \\
 & $\rho$=4 & 3.25 & 3.42 & 10.06 & 10.03 & 1.48 & 1.29 & 4.01 & 2.83 \\
\cline{1-10}
\multirow[c]{4}{*}{MST} & $\epsilon$=1 & 7.85 & 7.19 & 16.32 & 16.59 & 5.62 & 7.52 & 16.42 & 15.43 \\
 & $\epsilon$=10 & 7.57 & 6.96 & 16.05 & 16.34 & 4.84 & 5.10 & 7.80 & 7.13 \\
 & $\epsilon$=100 & 7.59 & 6.99 & 16.00 & 16.41 & 4.81 & 5.02 & 6.94 & 6.36 \\
 & $\epsilon$=$\infty$ & 7.55 & 6.94 & 16.04 & 16.39 & 4.81 & 5.00 & 6.91 & 6.32 \\
\cline{1-10}
\multirow[c]{3}{*}{PrivBayes} & $\epsilon$=1 & 5.72 & 5.81 & 16.12 & 14.99 & 5.64 & 6.02 & 10.78 & 10.93 \\
 & $\epsilon$=10 & 5.30 & 5.15 & 14.05 & 13.28 & 3.64 & 3.95 & 6.83 & 7.22 \\
 & $\epsilon$=100 & 3.93 & 4.42 & 14.31 & 13.74 & 3.16 & 3.55 & 6.90 & 6.07 \\
\cline{1-10}
\end{tabularx}
\end{table}

%% file: tables/acs_tvd_3way.tex
\begin{table}
\centering
\caption{Average TVD between 3-way marginals of synthetic and real ACS data.}
\label{table:tvd-3way-acs}
\begin{tabularx}{0.95\textwidth}{|ll|CCCC|CCCC|}
\cline{1-10} &  & \multicolumn{4}{c|}{population-wide} & \multicolumn{4}{c|}{groupwise} \\
 &  & WM & WF & BM & BF & WM & WF & BM & BF \\
\cline{1-10}\multirow[c]{4}{*}{AIM} & $\epsilon$=1 & 6.96 & 7.20 & 16.72 & 16.53 & 6.88 & 7.34 & 17.86 & 17.99 \\
 & $\epsilon$=10 & 5.26 & 5.51 & 15.83 & 15.27 & 4.51 & 4.59 & 12.82 & 12.62 \\
 & $\epsilon$=100 & 5.99 & 6.45 & 18.71 & 18.63 & 4.31 & 4.78 & 9.60 & 8.61 \\
 & $\epsilon$=$\infty$ & 6.35 & 6.82 & 19.91 & 18.68 & 4.50 & 4.83 & 8.78 & 9.21 \\
\cline{1-10}
\multirow[c]{3}{*}{GreedyBayes} & $\rho$=2 & 10.36 & 9.76 & 24.05 & 21.86 & 7.87 & 8.03 & 11.62 & 10.76 \\
 & $\rho$=3 & 7.88 & 7.50 & 18.57 & 19.86 & 5.76 & 5.16 & 9.86 & 9.20 \\
 & $\rho$=4 & 6.20 & 6.30 & 16.45 & 16.26 & 3.82 & 3.41 & 8.99 & 7.05 \\
\cline{1-10}
\multirow[c]{4}{*}{MST} & $\epsilon$=1 & 13.84 & 13.02 & 25.52 & 25.42 & 11.69 & 13.88 & 27.97 & 25.75 \\
 & $\epsilon$=10 & 13.45 & 12.69 & 25.18 & 25.10 & 10.71 & 11.10 & 17.35 & 15.80 \\
 & $\epsilon$=100 & 13.46 & 12.72 & 25.12 & 25.17 & 10.65 & 10.98 & 16.38 & 14.92 \\
 & $\epsilon$=$\infty$ & 13.42 & 12.67 & 25.13 & 25.13 & 10.66 & 10.98 & 16.34 & 14.88 \\
\cline{1-10}
\multirow[c]{3}{*}{PrivBayes} & $\epsilon$=1 & 11.01 & 11.15 & 25.06 & 23.63 & 11.09 & 11.71 & 21.51 & 21.04 \\
 & $\epsilon$=10 & 9.67 & 9.62 & 22.71 & 21.44 & 7.96 & 8.41 & 15.88 & 15.75 \\
 & $\epsilon$=100 & 7.61 & 8.35 & 22.66 & 21.46 & 6.78 & 7.49 & 15.22 & 13.52 \\
\cline{1-10}
\end{tabularx}
\end{table}

%% file: tables/acs_tvd_4way.tex
\begin{table}
\centering
\caption{Average TVD between 4-way marginals of synthetic and real ACS data.}
\label{table:tvd-4way-acs}
\begin{tabularx}{0.95\textwidth}{|ll|CCCC|CCCC|}
\cline{1-10} &  & \multicolumn{4}{c|}{population-wide} & \multicolumn{4}{c|}{groupwise} \\
 &  & WM & WF & BM & BF & WM & WF & BM & BF \\
\cline{1-10}\multirow[c]{4}{*}{AIM} & $\epsilon$=1 & 13.72 & 14.15 & 29.62 & 28.69 & 13.88 & 14.65 & 31.83 & 31.17 \\
 & $\epsilon$=10 & 11.38 & 11.81 & 28.26 & 26.86 & 10.61 & 10.89 & 26.30 & 25.19 \\
 & $\epsilon$=100 & 11.95 & 12.55 & 30.64 & 29.94 & 10.15 & 10.98 & 22.10 & 19.97 \\
 & $\epsilon$=$\infty$ & 12.16 & 12.86 & 31.96 & 30.08 & 10.35 & 10.90 & 19.89 & 20.26 \\
\cline{1-10}
\multirow[c]{3}{*}{GreedyBayes} & $\rho$=2 & 17.20 & 16.79 & 36.37 & 33.72 & 14.70 & 15.10 & 23.70 & 21.83 \\
 & $\rho$=3 & 13.74 & 13.23 & 29.89 & 30.47 & 11.53 & 10.36 & 19.56 & 18.56 \\
 & $\rho$=4 & 11.09 & 11.16 & 26.47 & 25.62 & 7.98 & 7.36 & 16.83 & 14.07 \\
\cline{1-10}
\multirow[c]{4}{*}{MST} & $\epsilon$=1 & 21.67 & 20.87 & 38.17 & 37.47 & 19.63 & 22.04 & 42.28 & 38.80 \\
 & $\epsilon$=10 & 21.20 & 20.49 & 37.80 & 37.12 & 18.50 & 19.06 & 31.24 & 28.51 \\
 & $\epsilon$=100 & 21.21 & 20.51 & 37.73 & 37.16 & 18.42 & 18.93 & 30.17 & 27.55 \\
 & $\epsilon$=$\infty$ & 21.16 & 20.46 & 37.72 & 37.12 & 18.44 & 18.93 & 30.14 & 27.55 \\
\cline{1-10}
\multirow[c]{3}{*}{PrivBayes} & $\epsilon$=1 & 18.52 & 18.76 & 37.88 & 36.06 & 18.75 & 19.70 & 36.14 & 34.66 \\
 & $\epsilon$=10 & 16.55 & 16.65 & 35.79 & 33.72 & 14.89 & 15.50 & 29.63 & 28.60 \\
 & $\epsilon$=100 & 13.89 & 14.96 & 35.24 & 33.12 & 13.12 & 14.08 & 28.46 & 25.61 \\
\cline{1-10}
\end{tabularx}
\end{table}

%% file: tables/acs_tvd_global.tex
\begin{table}
\centering
\caption{TVD between synthetic and real ACS data, averaged across replicas.}
\label{table:tvd-global-acs}
\begin{tabularx}{0.95\textwidth}{|ll|CCCC|CCCC|}
\cline{1-10} &  & \multicolumn{4}{c|}{population-wide} & \multicolumn{4}{c|}{groupwise} \\
 &  & BF & BM & WF & WM & BF & BM & WF & WM \\
\cline{1-10}\multirow[c]{4}{*}{AIM} & $\epsilon$=1 & 98.75 & 99.00 & 92.63 & 91.59 & 49.52 & 49.65 & 47.08 & 46.70 \\
 & $\epsilon$=10 & 98.34 & 98.76 & 91.49 & 90.58 & 49.31 & 49.48 & 46.47 & 46.19 \\
 & $\epsilon$=100 & 98.07 & 98.61 & 91.05 & 90.23 & 48.82 & 49.15 & 46.26 & 45.79 \\
 & $\epsilon$=$\infty$ & 98.14 & 98.73 & 90.70 & 90.07 & 48.47 & 48.06 & 46.03 & 45.74 \\
\cline{1-10}
\multirow[c]{3}{*}{GreedyBayes} & $\rho$=2 & 98.89 & 99.25 & 92.86 & 92.06 & 48.60 & 48.87 & 46.93 & 46.57 \\
 & $\rho$=3 & 96.43 & 96.63 & 88.57 & 87.83 & 45.81 & 44.60 & 43.89 & 44.91 \\
 & $\rho$=4 & 88.57 & 89.58 & 79.01 & 78.65 & 39.06 & 38.75 & 39.40 & 39.53 \\
\cline{1-10}
\multirow[c]{4}{*}{MST} & $\epsilon$=1 & 99.06 & 99.33 & 94.27 & 94.03 & 49.62 & 49.80 & 47.68 & 47.22 \\
 & $\epsilon$=10 & 99.08 & 99.31 & 94.23 & 93.89 & 49.40 & 49.59 & 47.46 & 47.13 \\
 & $\epsilon$=100 & 99.03 & 99.32 & 94.22 & 93.93 & 49.36 & 49.59 & 47.45 & 47.11 \\
 & $\epsilon$=$\infty$ & 49.57 & 49.67 & 47.55 & 47.37 & 49.36 & 49.59 & 47.46 & 47.12 \\
\cline{1-10}
\multirow[c]{3}{*}{PrivBayes} & $\epsilon$=1 & 49.53 & 49.63 & 47.27 & 47.00 & 49.61 & 49.76 & 47.43 & 46.98 \\
 & $\epsilon$=10 & 49.40 & 49.55 & 46.96 & 46.64 & 49.38 & 49.54 & 46.88 & 46.63 \\
 & $\epsilon$=100 & 49.27 & 49.46 & 46.69 & 46.21 & 49.12 & 49.38 & 46.48 & 46.15 \\
\cline{1-10}
\end{tabularx}
\end{table}

%% file: tables/acs_auroc_diff.tex
\begin{table}
\centering
\caption{Difference of AUROC when training from synthetic rather than real ACS data, evaluated on a real-world testing set and averaged across replicas.}
\label{table:auroc-diff-acs}
\begin{tabular}{|ll|ccccc|ccccc|}
\cline{1-12} &  & \multicolumn{5}{c|}{population-wide} & \multicolumn{5}{c|}{groupwise} \\
 &  & all & WM & WF & BM & BF & all & WM & WF & BM & BF \\
\cline{1-12}\multirow[c]{4}{*}{AIM} & $\epsilon$=1 & -1.34 & -1.20 & -1.56 & -2.04 & -1.91 & -1.47 & -1.36 & -1.56 & -2.55 & -2.71 \\
 & $\epsilon$=10 & -0.90 & -0.77 & -1.05 & -1.37 & -1.27 & -0.68 & -0.62 & -0.77 & -1.21 & -0.75 \\
 & $\epsilon$=100 & -1.00 & -0.90 & -1.09 & -1.38 & -1.40 & -0.56 & -0.55 & -0.63 & -0.91 & -0.49 \\
 & $\epsilon$=$\infty$ & -0.91 & -0.74 & -1.08 & -1.34 & -1.48 & -0.88 & -0.68 & -1.18 & -1.41 & -0.82 \\
\cline{1-12}
\multirow[c]{3}{*}{GreedyBayes} & $\rho$=2 & -3.69 & -3.85 & -3.81 & -4.77 & -4.66 & -3.57 & -3.70 & -3.56 & -6.33 & -5.50 \\
 & $\rho$=3 & -4.86 & -5.42 & -3.89 & -6.14 & -4.55 & -3.04 & -2.74 & -3.35 & -8.26 & -4.01 \\
 & $\rho$=4 & -2.91 & -3.06 & -2.93 & -4.02 & -3.52 & -1.18 & -1.37 & -1.00 & -1.94 & -2.02 \\
\cline{1-12}
\multirow[c]{4}{*}{MST} & $\epsilon$=1 & -9.39 & -7.93 & -10.35 & -8.03 & -9.95 & -9.05 & -7.33 & -12.79 & -16.67 & -11.92 \\
 & $\epsilon$=10 & -9.39 & -7.89 & -10.37 & -8.05 & -9.97 & -8.97 & -7.30 & -12.65 & -15.91 & -11.58 \\
 & $\epsilon$=100 & -9.39 & -7.91 & -10.32 & -7.98 & -9.93 & -9.01 & -7.30 & -12.78 & -16.28 & -11.62 \\
 & $\epsilon$=$\infty$ & -9.40 & -7.94 & -10.36 & -8.02 & -9.93 & -8.97 & -7.29 & -12.69 & -16.17 & -11.43 \\
\cline{1-12}
\multirow[c]{3}{*}{PrivBayes} & $\epsilon$=1 & -2.30 & -2.23 & -2.44 & -2.50 & -2.31 & -2.33 & -2.21 & -2.46 & -3.84 & -3.28 \\
 & $\epsilon$=10 & -1.44 & -1.36 & -1.60 & -2.41 & -1.80 & -1.25 & -1.23 & -1.26 & -2.50 & -1.76 \\
 & $\epsilon$=100 & -1.67 & -1.53 & -1.89 & -2.18 & -2.28 & -1.07 & -0.85 & -1.12 & -3.41 & -2.82 \\
\cline{1-12}
\end{tabular}
\end{table}

%% file: tables/acs_aurpc_diff.tex
\begin{table}
\centering
\caption{Difference of AURPC when training from synthetic rather than real ACS data, evaluated on a real-world testing set and averaged across replicas.}
\label{table:aurpc-diff-acs}
\begin{tabular}{|ll|ccccc|ccccc|}
\cline{1-12} &  & \multicolumn{5}{c|}{population-wide} & \multicolumn{5}{c|}{groupwise} \\
 &  & all & WM & WF & BM & BF & all & WM & WF & BM & BF \\
\cline{1-12}\multirow[c]{4}{*}{AIM} & $\epsilon$=1 & -1.94 & -1.48 & -2.75 & -2.91 & -2.92 & -2.19 & -1.70 & -2.80 & -3.75 & -3.70 \\
 & $\epsilon$=10 & -1.20 & -0.86 & -1.66 & -1.95 & -1.75 & -0.90 & -0.69 & -1.24 & -1.67 & -0.80 \\
 & $\epsilon$=100 & -1.49 & -1.11 & -1.90 & -2.08 & -1.98 & -0.77 & -0.64 & -0.97 & -1.29 & -0.52 \\
 & $\epsilon$=$\infty$ & -1.26 & -0.86 & -1.78 & -1.93 & -2.11 & -1.26 & -0.82 & -2.14 & -2.73 & -1.45 \\
\cline{1-12}
\multirow[c]{3}{*}{GreedyBayes} & $\rho$=2 & -5.75 & -4.94 & -7.11 & -7.66 & -7.41 & -5.58 & -4.74 & -6.95 & -10.86 & -7.90 \\
 & $\rho$=3 & -6.69 & -5.78 & -6.57 & -8.73 & -7.01 & -4.35 & -3.35 & -5.53 & -17.23 & -6.40 \\
 & $\rho$=4 & -4.21 & -3.53 & -5.21 & -6.40 & -5.74 & -1.95 & -1.76 & -2.16 & -3.52 & -3.36 \\
\cline{1-12}
\multirow[c]{4}{*}{MST} & $\epsilon$=1 & -15.17 & -10.88 & -19.66 & -14.33 & -19.12 & -13.39 & -10.30 & -22.96 & -28.86 & -22.75 \\
 & $\epsilon$=10 & -15.23 & -10.87 & -19.86 & -14.76 & -19.29 & -13.35 & -10.30 & -22.82 & -28.37 & -22.45 \\
 & $\epsilon$=100 & -15.14 & -10.84 & -19.57 & -14.50 & -18.94 & -13.37 & -10.29 & -23.01 & -28.12 & -22.08 \\
 & $\epsilon$=$\infty$ & -15.17 & -10.91 & -19.74 & -14.71 & -18.93 & -13.33 & -10.30 & -22.74 & -28.04 & -22.33 \\
\cline{1-12}
\multirow[c]{3}{*}{PrivBayes} & $\epsilon$=1 & -3.16 & -2.45 & -4.06 & -3.56 & -3.45 & -3.32 & -2.52 & -4.32 & -6.44 & -5.70 \\
 & $\epsilon$=10 & -2.19 & -1.74 & -2.75 & -3.93 & -3.14 & -1.76 & -1.44 & -2.05 & -4.10 & -2.64 \\
 & $\epsilon$=100 & -2.54 & -1.94 & -3.50 & -3.53 & -3.77 & -1.41 & -0.99 & -1.87 & -5.43 & -3.53 \\
\cline{1-12}
\end{tabular}
\end{table}

%% file: tables/acs_bal_acc_diff.tex
\begin{table}
\centering
\caption{Difference of Balanced Accuracy when training from synthetic rather than real ACS data, evaluated on a real-world testing set and averaged across replicas.}
\label{table:bal_acc-diff-acs}
\begin{tabular}{|ll|ccccc|ccccc|}
\cline{1-12} &  & \multicolumn{5}{c|}{population-wide} & \multicolumn{5}{c|}{groupwise} \\
 &  & all & WM & WF & BM & BF & all & WM & WF & BM & BF \\
\cline{1-12}\multirow[c]{4}{*}{AIM} & $\epsilon$=1 & -1.21 & -1.33 & -1.18 & -1.25 & -0.77 & -1.30 & -1.40 & -1.21 & -1.71 & -0.75 \\
 & $\epsilon$=10 & -0.79 & -0.65 & -0.94 & -0.65 & -0.41 & -0.68 & -0.82 & -0.59 & -0.56 & 0.22 \\
 & $\epsilon$=100 & -0.93 & -1.01 & -1.05 & -0.76 & -0.98 & -0.52 & -0.66 & -0.44 & -0.66 & 0.06 \\
 & $\epsilon$=$\infty$ & -0.83 & -0.74 & -0.95 & -0.45 & -0.64 & -0.73 & -0.65 & -0.97 & -1.15 & -0.34 \\
\cline{1-12}
\multirow[c]{3}{*}{GreedyBayes} & $\rho$=2 & -3.37 & -3.49 & -3.50 & -3.59 & -3.85 & -3.28 & -3.41 & -3.27 & -5.82 & -5.15 \\
 & $\rho$=3 & -4.43 & -4.47 & -3.16 & -4.87 & -2.72 & -2.82 & -3.13 & -3.11 & -6.50 & -4.52 \\
 & $\rho$=4 & -2.81 & -2.74 & -2.74 & -3.00 & -3.19 & -1.14 & -1.47 & -0.94 & -1.55 & -2.13 \\
\cline{1-12}
\multirow[c]{4}{*}{MST} & $\epsilon$=1 & -8.53 & -7.18 & -9.01 & -6.10 & -7.32 & -8.06 & -7.38 & -10.96 & -12.71 & -9.81 \\
 & $\epsilon$=10 & -8.52 & -7.04 & -8.94 & -6.37 & -7.52 & -7.95 & -7.36 & -10.94 & -11.34 & -9.63 \\
 & $\epsilon$=100 & -8.56 & -7.14 & -9.04 & -6.14 & -7.40 & -8.05 & -7.40 & -11.11 & -12.39 & -9.65 \\
 & $\epsilon$=$\infty$ & -8.53 & -7.15 & -9.03 & -6.06 & -7.55 & -8.01 & -7.38 & -11.02 & -12.92 & -9.66 \\
\cline{1-12}
\multirow[c]{3}{*}{PrivBayes} & $\epsilon$=1 & -2.29 & -3.05 & -2.36 & -1.42 & -1.05 & -2.22 & -2.90 & -2.19 & -2.79 & -2.31 \\
 & $\epsilon$=10 & -1.25 & -1.19 & -1.44 & -1.67 & -1.11 & -1.11 & -1.22 & -1.04 & -1.49 & -1.17 \\
 & $\epsilon$=100 & -1.52 & -1.32 & -1.81 & -1.62 & -1.18 & -0.94 & -0.75 & -0.88 & -3.23 & -2.54 \\
\cline{1-12}
\end{tabular}
\end{table}

%% file: tables/acs_accuracy_diff.tex
\begin{table}
\centering
\caption{Difference of Accuracy when training from synthetic rather than real ACS data, evaluated on a real-world testing set and averaged across replicas.}
\label{table:accuracy-diff-acs}
\begin{tabular}{|ll|ccccc|ccccc|}
\cline{1-12} &  & \multicolumn{5}{c|}{population-wide} & \multicolumn{5}{c|}{groupwise} \\
 &  & all & WM & WF & BM & BF & all & WM & WF & BM & BF \\
\cline{1-12}\multirow[c]{4}{*}{AIM} & $\epsilon$=1 & -1.52 & -1.40 & -1.58 & -2.24 & -1.49 & -1.58 & -1.46 & -1.53 & -2.34 & -2.33 \\
 & $\epsilon$=10 & -0.80 & -0.65 & -0.93 & -1.29 & -0.75 & -0.86 & -0.86 & -0.78 & -1.38 & -1.08 \\
 & $\epsilon$=100 & -0.99 & -1.05 & -0.80 & -2.26 & -0.92 & -0.65 & -0.69 & -0.61 & -0.84 & -0.50 \\
 & $\epsilon$=$\infty$ & -0.80 & -0.73 & -0.81 & -1.60 & -0.73 & -0.77 & -0.67 & -0.86 & -0.81 & -0.92 \\
\cline{1-12}
\multirow[c]{3}{*}{GreedyBayes} & $\rho$=2 & -3.26 & -3.51 & -2.74 & -5.49 & -3.23 & -3.20 & -3.45 & -2.95 & -4.49 & -1.84 \\
 & $\rho$=3 & -4.19 & -4.29 & -3.90 & -5.13 & -4.73 & -3.00 & -3.28 & -2.37 & -7.77 & -1.41 \\
 & $\rho$=4 & -2.63 & -2.69 & -2.44 & -3.93 & -2.43 & -1.17 & -1.51 & -0.87 & -1.13 & -0.64 \\
\cline{1-12}
\multirow[c]{4}{*}{MST} & $\epsilon$=1 & -9.66 & -7.24 & -12.29 & -7.88 & -11.88 & -8.91 & -7.81 & -9.74 & -15.43 & -6.51 \\
 & $\epsilon$=10 & -9.50 & -7.05 & -12.14 & -7.74 & -11.91 & -8.76 & -7.79 & -9.39 & -15.41 & -6.58 \\
 & $\epsilon$=100 & -9.60 & -7.17 & -12.26 & -7.67 & -11.84 & -8.84 & -7.84 & -9.52 & -15.63 & -6.46 \\
 & $\epsilon$=$\infty$ & -9.65 & -7.20 & -12.30 & -7.80 & -12.01 & -8.78 & -7.81 & -9.44 & -15.44 & -6.32 \\
\cline{1-12}
\multirow[c]{3}{*}{PrivBayes} & $\epsilon$=1 & -2.66 & -3.22 & -1.85 & -5.01 & -2.15 & -2.59 & -3.07 & -1.91 & -3.53 & -2.88 \\
 & $\epsilon$=10 & -1.29 & -1.21 & -1.22 & -2.87 & -1.14 & -1.27 & -1.27 & -1.17 & -2.43 & -1.09 \\
 & $\epsilon$=100 & -1.39 & -1.30 & -1.36 & -2.09 & -1.95 & -0.92 & -0.75 & -1.03 & -1.71 & -0.83 \\
\cline{1-12}
\end{tabular}
\end{table}

%% file: tables/acs_auroc.tex
\begin{table}
\centering
\caption{AUROC of classifiers trained from synthetic ACS data, evaluated on a real-world testing set and averaged across replicas.}
\label{table:auroc-acs}
\begin{tabularx}{0.95\textwidth}{|ll|CCCCC|CCCCC|}
\cline{1-12} &  & \multicolumn{5}{c|}{population-wide} & \multicolumn{5}{c|}{groupwise} \\
 &  & all & WM & WF & BM & BF & all & WM & WF & BM & BF \\
\cline{1-12}\multirow[c]{4}{*}{AIM} & $\epsilon$=1 & 87.67 & 86.96 & 87.55 & 85.18 & 85.88 & 87.54 & 86.80 & 87.55 & 84.67 & 85.08 \\
 & $\epsilon$=10 & 88.11 & 87.39 & 88.06 & 85.85 & 86.52 & 88.33 & 87.54 & 88.33 & 86.01 & 87.03 \\
 & $\epsilon$=100 & 88.00 & 87.26 & 88.02 & 85.84 & 86.39 & 88.45 & 87.61 & 88.47 & 86.31 & 87.30 \\
 & $\epsilon$=$\infty$ & 88.09 & 87.42 & 88.03 & 85.88 & 86.30 & 88.13 & 87.48 & 87.92 & 85.81 & 86.96 \\
\cline{1-12}
\multirow[c]{3}{*}{GreedyBayes} & $\rho$=2 & 85.32 & 84.31 & 85.30 & 82.46 & 83.13 & 85.44 & 84.46 & 85.55 & 80.89 & 82.29 \\
 & $\rho$=3 & 84.14 & 82.74 & 85.22 & 81.08 & 83.24 & 85.97 & 85.42 & 85.75 & 78.96 & 83.78 \\
 & $\rho$=4 & 86.09 & 85.10 & 86.18 & 83.20 & 84.27 & 87.83 & 86.80 & 88.11 & 85.28 & 85.77 \\
\cline{1-12}
\multirow[c]{4}{*}{MST} & $\epsilon$=1 & 79.62 & 80.23 & 78.76 & 79.19 & 77.84 & 79.96 & 80.83 & 76.32 & 70.55 & 75.87 \\
 & $\epsilon$=10 & 79.62 & 80.27 & 78.74 & 79.17 & 77.82 & 80.03 & 80.86 & 76.46 & 71.31 & 76.21 \\
 & $\epsilon$=100 & 79.62 & 80.25 & 78.79 & 79.24 & 77.86 & 80.00 & 80.87 & 76.33 & 70.94 & 76.17 \\
 & $\epsilon$=$\infty$ & 79.61 & 80.22 & 78.74 & 79.20 & 77.86 & 80.03 & 80.87 & 76.42 & 71.05 & 76.36 \\
\cline{1-12}
\multirow[c]{3}{*}{PrivBayes} & $\epsilon$=1 & 86.70 & 85.93 & 86.67 & 84.72 & 85.48 & 86.67 & 85.95 & 86.65 & 83.38 & 84.51 \\
 & $\epsilon$=10 & 87.56 & 86.80 & 87.51 & 84.82 & 85.99 & 87.76 & 86.93 & 87.84 & 84.72 & 86.03 \\
 & $\epsilon$=100 & 87.34 & 86.63 & 87.22 & 85.04 & 85.51 & 87.94 & 87.31 & 87.99 & 83.81 & 84.96 \\
\cline{1-12}
\end{tabularx}
\end{table}

%% file: tables/acs_aurpc.tex
\begin{table}
\centering
\caption{AURPC of classifiers trained from synthetic ACS data, evaluated on a real-world testing set and averaged across replicas.}
\label{table:aurpc-acs}
\begin{tabularx}{0.95\textwidth}{|ll|CCCCC|CCCCC|}
\cline{1-12} &  & \multicolumn{5}{c|}{population-wide} & \multicolumn{5}{c|}{groupwise} \\
 &  & all & WM & WF & BM & BF & all & WM & WF & BM & BF \\
\cline{1-12}\multirow[c]{4}{*}{AIM} & $\epsilon$=1 & 81.39 & 85.05 & 76.22 & 70.92 & 67.23 & 81.14 & 84.83 & 76.17 & 70.08 & 66.45 \\
 & $\epsilon$=10 & 82.13 & 85.66 & 77.31 & 71.88 & 68.40 & 82.43 & 85.84 & 77.74 & 72.16 & 69.35 \\
 & $\epsilon$=100 & 81.84 & 85.42 & 77.08 & 71.75 & 68.17 & 82.56 & 85.89 & 78.01 & 72.54 & 69.62 \\
 & $\epsilon$=$\infty$ & 82.07 & 85.67 & 77.20 & 71.90 & 68.04 & 82.07 & 85.70 & 76.83 & 71.10 & 68.70 \\
\cline{1-12}
\multirow[c]{3}{*}{GreedyBayes} & $\rho$=2 & 77.58 & 81.59 & 71.86 & 66.17 & 62.74 & 77.75 & 81.79 & 72.02 & 62.97 & 62.25 \\
 & $\rho$=3 & 76.64 & 80.75 & 72.40 & 65.10 & 63.14 & 78.98 & 83.18 & 73.44 & 56.61 & 63.74 \\
 & $\rho$=4 & 79.12 & 83.00 & 73.77 & 67.43 & 64.41 & 81.39 & 84.77 & 76.81 & 70.31 & 66.79 \\
\cline{1-12}
\multirow[c]{4}{*}{MST} & $\epsilon$=1 & 68.16 & 75.65 & 59.31 & 59.50 & 51.03 & 69.95 & 76.23 & 56.01 & 44.97 & 47.40 \\
 & $\epsilon$=10 & 68.10 & 75.66 & 59.11 & 59.07 & 50.86 & 69.98 & 76.23 & 56.15 & 45.47 & 47.70 \\
 & $\epsilon$=100 & 68.19 & 75.69 & 59.40 & 59.34 & 51.20 & 69.97 & 76.24 & 55.96 & 45.71 & 48.06 \\
 & $\epsilon$=$\infty$ & 68.17 & 75.62 & 59.23 & 59.12 & 51.22 & 70.00 & 76.23 & 56.24 & 45.79 & 47.82 \\
\cline{1-12}
\multirow[c]{3}{*}{PrivBayes} & $\epsilon$=1 & 80.17 & 84.08 & 74.91 & 70.27 & 66.70 & 80.01 & 84.01 & 74.65 & 67.39 & 64.44 \\
 & $\epsilon$=10 & 81.14 & 84.79 & 76.22 & 69.90 & 67.01 & 81.57 & 85.08 & 76.93 & 69.73 & 67.51 \\
 & $\epsilon$=100 & 80.80 & 84.58 & 75.47 & 70.30 & 66.38 & 81.92 & 85.54 & 77.10 & 68.40 & 66.62 \\
\cline{1-12}
\end{tabularx}
\end{table}

%% file: tables/acs_bal_acc.tex
\begin{table}
\centering
\caption{Balanced Accuracy of classifiers trained from synthetic ACS data, evaluated on a real-world testing set and averaged across replicas.}
\label{table:bal_acc-acs}
\begin{tabularx}{0.95\textwidth}{|ll|CCCCC|CCCCC|}
\cline{1-12} &  & \multicolumn{5}{c|}{population-wide} & \multicolumn{5}{c|}{groupwise} \\
 &  & all & WM & WF & BM & BF & all & WM & WF & BM & BF \\
\cline{1-12}\multirow[c]{4}{*}{AIM} & $\epsilon$=1 & 79.36 & 77.72 & 79.27 & 76.36 & 76.89 & 79.27 & 77.65 & 79.24 & 75.90 & 76.92 \\
 & $\epsilon$=10 & 79.77 & 78.40 & 79.52 & 76.96 & 77.26 & 79.89 & 78.23 & 79.87 & 77.05 & 77.89 \\
 & $\epsilon$=100 & 79.64 & 78.04 & 79.40 & 76.85 & 76.68 & 80.05 & 78.40 & 80.02 & 76.95 & 77.72 \\
 & $\epsilon$=$\infty$ & 79.73 & 78.31 & 79.50 & 77.16 & 77.03 & 79.84 & 78.41 & 79.49 & 76.46 & 77.33 \\
\cline{1-12}
\multirow[c]{3}{*}{GreedyBayes} & $\rho$=2 & 77.20 & 75.56 & 76.95 & 74.02 & 73.81 & 77.29 & 75.64 & 77.18 & 71.79 & 72.52 \\
 & $\rho$=3 & 76.14 & 74.58 & 77.30 & 72.74 & 74.95 & 77.74 & 75.92 & 77.35 & 71.11 & 73.15 \\
 & $\rho$=4 & 77.76 & 76.31 & 77.71 & 74.61 & 74.48 & 79.43 & 77.58 & 79.52 & 76.06 & 75.53 \\
\cline{1-12}
\multirow[c]{4}{*}{MST} & $\epsilon$=1 & 72.04 & 71.87 & 71.44 & 71.51 & 70.35 & 72.51 & 71.67 & 69.49 & 64.90 & 67.86 \\
 & $\epsilon$=10 & 72.05 & 72.01 & 71.52 & 71.25 & 70.15 & 72.61 & 71.69 & 69.51 & 66.27 & 68.03 \\
 & $\epsilon$=100 & 72.00 & 71.91 & 71.41 & 71.47 & 70.26 & 72.52 & 71.65 & 69.35 & 65.22 & 68.02 \\
 & $\epsilon$=$\infty$ & 72.04 & 71.91 & 71.42 & 71.55 & 70.12 & 72.55 & 71.68 & 69.44 & 64.69 & 68.00 \\
\cline{1-12}
\multirow[c]{3}{*}{PrivBayes} & $\epsilon$=1 & 78.28 & 76.00 & 78.10 & 76.19 & 76.61 & 78.35 & 76.15 & 78.27 & 74.82 & 75.36 \\
 & $\epsilon$=10 & 79.32 & 77.86 & 79.02 & 75.94 & 76.55 & 79.46 & 77.83 & 79.42 & 76.12 & 76.49 \\
 & $\epsilon$=100 & 79.05 & 77.74 & 78.65 & 75.99 & 76.48 & 79.63 & 78.30 & 79.57 & 74.38 & 75.12 \\
\cline{1-12}
\end{tabularx}
\end{table}

%% file: tables/acs_accuracy.tex
\begin{table}
\centering
\caption{Accuracy of classifiers trained from synthetic ACS data, evaluated on a real-world testing set and averaged across replicas.}
\label{table:accuracy-acs}
\begin{tabularx}{0.95\textwidth}{|ll|CCCCC|CCCCC|}
\cline{1-12} &  & \multicolumn{5}{c|}{population-wide} & \multicolumn{5}{c|}{groupwise} \\
 &  & all & WM & WF & BM & BF & all & WM & WF & BM & BF \\
\cline{1-12}\multirow[c]{4}{*}{AIM} & $\epsilon$=1 & 78.56 & 77.31 & 79.77 & 77.36 & 81.18 & 78.50 & 77.25 & 79.83 & 77.26 & 80.34 \\
 & $\epsilon$=10 & 79.28 & 78.06 & 80.42 & 78.31 & 81.92 & 79.22 & 77.85 & 80.57 & 78.22 & 81.59 \\
 & $\epsilon$=100 & 79.09 & 77.66 & 80.55 & 77.34 & 81.75 & 79.43 & 78.02 & 80.74 & 78.76 & 82.17 \\
 & $\epsilon$=$\infty$ & 79.28 & 77.98 & 80.55 & 78.00 & 81.94 & 79.31 & 78.04 & 80.50 & 78.80 & 81.75 \\
\cline{1-12}
\multirow[c]{3}{*}{GreedyBayes} & $\rho$=2 & 76.82 & 75.20 & 78.61 & 74.12 & 79.44 & 76.88 & 75.26 & 78.40 & 75.11 & 80.83 \\
 & $\rho$=3 & 75.89 & 74.42 & 77.45 & 74.47 & 77.94 & 77.08 & 75.43 & 78.98 & 71.83 & 81.26 \\
 & $\rho$=4 & 77.45 & 76.01 & 78.91 & 75.67 & 80.24 & 78.91 & 77.20 & 80.49 & 78.47 & 82.03 \\
\cline{1-12}
\multirow[c]{4}{*}{MST} & $\epsilon$=1 & 70.42 & 71.47 & 69.07 & 71.72 & 70.79 & 71.17 & 70.90 & 71.61 & 64.17 & 76.16 \\
 & $\epsilon$=10 & 70.58 & 71.66 & 69.21 & 71.87 & 70.76 & 71.32 & 70.92 & 71.96 & 64.19 & 76.09 \\
 & $\epsilon$=100 & 70.48 & 71.54 & 69.09 & 71.93 & 70.83 & 71.24 & 70.87 & 71.83 & 63.97 & 76.21 \\
 & $\epsilon$=$\infty$ & 70.43 & 71.51 & 69.05 & 71.80 & 70.66 & 71.30 & 70.90 & 71.91 & 64.16 & 76.35 \\
\cline{1-12}
\multirow[c]{3}{*}{PrivBayes} & $\epsilon$=1 & 77.42 & 75.49 & 79.50 & 74.59 & 80.52 & 77.49 & 75.64 & 79.44 & 76.07 & 79.79 \\
 & $\epsilon$=10 & 78.79 & 77.49 & 80.14 & 76.73 & 81.53 & 78.81 & 77.44 & 80.19 & 77.17 & 81.58 \\
 & $\epsilon$=100 & 78.69 & 77.41 & 80.00 & 77.52 & 80.72 & 79.16 & 77.96 & 80.33 & 77.89 & 81.84 \\
\cline{1-12}
\end{tabularx}
\end{table}